\newcommand{\ghost}[1]{\raisebox{0pt}[0pt][0pt]{\makebox[0pt][l]{#1}}}
\newcommand{\lang}[1]{\ensuremath{\mathbf{#1}}} %
\newcommand{\Clang}{\lang{C}} %
\newcommand{\Plang}{\lang{P}} %
\newcommand{\Vlang}{\lang{V}} %
\newcommand{\Nlang}{\lang{N}} %
\newcommand{\opfont}[1]{\text{\sf{#1}}} %
\renewcommand{\vec}[1]{\boldsymbol{#1}} %
\newcommand{\dom}{\opfont{dom}} %
\newcommand{\im}{\opfont{im}} %
\newcommand{\arity}{\opfont{ar}} %
\newcommand{\Dnter}{\mathcal{D}} %
\newcommand{\Inter}{\mathcal{I}} %
\newcommand{\arule}{\rho} %
\newcommand{\rbody}{\opfont{body}} %
\newcommand{\rhead}{\opfont{head}} %
\newcommand{\aprogram}{\Sigma} %
\newcommand{\rbodyA}{\rbody_1}
\newcommand{\rbodyB}{\rbody_2}
\newcommand{\rbodyi}{\rbody_i}
\newcommand{\rheadA}{\rhead_1}
\newcommand{\rheadB}{\rhead_2}
\newcommand{\rheadi}{\rhead_i}
\newcommand{\relres}{\mathrel{\prec^{\square}}}
\newcommand{\relpos}{\mathrel{\prec^{+}}}
\newcommand{\complclass}[1]{\text{{\sc #1}}\xspace} %
\newcommand{\NP}{\complclass{NP}}
\newcommand{\SigmaTwoP}{\ensuremath{\Sigma_{\complclass{2}}^{\complclass{P}}}}
\newcommand{\TwoExpTime}{\complclass{2ExpTime}}
\title{Efficient Dependency Analysis\\ for Rule-Based Ontologies}
\author{Larry González\orcidID{0000-0001-9412-9363} \and
    Alex Ivliev\orcidID{0000-0002-1604-6308} \and
    Markus Krötzsch\orcidID{0000-0002-9172-2601} \and
    Stephan Mennicke\orcidID{0000-0002-3293-2940}}
\institute{Knowledge-Based Systems Group, TU Dresden, Germany\\ \email{firstname.lastname@tu-dresden.de}}
\author{Larry González \and
    Alex Ivliev \and
    Markus Krötzsch \and
    Stephan Mennicke}
\newcommand\numOfRuleSets{201\xspace}
\newcommand\numOfOntologies{787\xspace}
\newcommand\imp[1][N]{\textsf{#1}\xspace}
\begin{document}
\maketitle

\begin{abstract}
Several types of \emph{dependencies} have 
been proposed for the static analysis of existential rule ontologies, promising insights about 
computational properties and possible practical uses of a given set of rules, e.g., in ontology-based query answering.
Unfortunately, these dependencies are rarely implemented,
so their potential is hardly realised in practice.
We focus on two kinds of rule dependencies -- \emph{positive reliances} and \emph{restraints} --
and design and implement optimised algorithms for their efficient computation.
Experiments on real-world ontologies of up to more than 100,000 rules show the scalability
of our approach, which lets us realise several previously proposed applications
as practical case studies. In particular, we can analyse
to what extent rule-based bottom-up approaches of reasoning can be guaranteed to yield
redundancy-free ``lean'' knowledge graphs (so-called \emph{cores}) on practical ontologies.
\end{abstract}
 \keywords{existential rules \and chase algorithm \and rule dependencies \and acyclicity \and core stratification \and ontology-based query answering \and ontology reasoning}
\section{Introduction}

\emph{Existential rules} are a versatile knowledge representation language 
with relevance
in ontological reasoning \cite{BLMS11:decline,CaliGL12:datalogplusontologies,CGP12:stickyAIJ,CG+13:acyclicity},
databases \cite{FaginKMP05,DNR08:corechase,GO:chase18}, and declarative computing in general
\cite{BSG:Vadalog18,Carral+19:ChasingSets,BCKRT:homClosedDecExRules2021}.
In various semantic web applications, existential rule engines have been used to process knowledge graphs and ontologies,
often realising performance advantages on large data sets
\cite{BLMRS15:Graal,VLog4j2019,N+15:RDFoxToolPaper,BSG:Vadalog18}.

Existential rules extend the basic rule language Datalog with the facility for \emph{value invention}, expressed by existentially quantified variables in conclusions. This ability to refer to ``unknown'' values is an important similarity to description logics (DLs) and
the DL-based ontology standard OWL, and many such ontologies can equivalently be expressed in existential rules.
This can be a practical approach for ontology-based query answering \cite{CG+13:acyclicity,CDK2018:combinedHornALCHOIQ}.
For reasoning, many rule engines rely on \emph{materialisation}, where the input data is expanded iteratively until all
rules are satisfied (this type of computation is called \emph{chase}).
With existentials, this can require adding new ``anonymous'' individuals -- called \emph{nulls} --,
and the process may not terminate. Several \emph{acyclicity conditions} define cases where termination is ensured,
and were shown to apply to many practical ontologies~\cite{CG+13:acyclicity}.

Nulls correspond to \emph{blank nodes} in RDF, and -- like bnodes in RDF \cite{Mallea2011:blanks} -- are not always desirable.
Avoiding nulls entirely is not an option in chase-based reasoning, but one can still avoid some ``semantically redundant'' nulls.
For example, given a fact $\textsf{person}(\textsf{alice})$ and a rule $\textsf{person}(x)\to\exists y.\, \textsf{parent}(x,y)$,
the chase would derive $\textsf{parent}(\textsf{alice},n)$ for a fresh null $n$. However, if we already know that
$\textsf{parent}(\textsf{alice},\textsf{bob})$, then this inference is redundant and can be omitted.
In general, structures that are free of such redundancies are mathematically known as \emph{cores}. An RDF-graph that
is a core is called a \emph{lean} graph \cite{Hogan2017:leanRDF}.
Unfortunately, the computation of cores is expensive, and can in general not be afforded during the chase.
Sometimes, however, when rules satisfy a condition known as \emph{core stratification}, practical chase
algorithms can also produce a core directly \cite{KR20:cores}.

Interestingly, both of the previously mentioned types of conditions -- acyclicity and core stratification -- are detected
by analysing \emph{dependencies}\footnote{We use the term only informally,
since \emph{(tuple-generating) dependencies} are also a common name for rules in databases.}
that indicate possible semantic interactions between rules.
Early works focussed on cases where a rule $\rho_2$ \emph{positively relies} on a rule $\rho_1$ in the sense that
an application of rule $\rho_1$ might trigger an application of rule $\rho_2$.
They are used to detect several forms of acyclity \cite{BLMS11:decline,DNR08:corechase,MSL:chaseTerminationStratification09}.
When adding negation, a rule might also inhibit another, and such \emph{negative reliances}
are used to define semantically well-behaved fragments of nonmonotonic existential rules \cite{KR20:cores,MKH13:reliances}.
A third kind of dependency are \emph{restraints}, which indicate that the application of
one rule might render another one redundant:
restraints were used to define \emph{core stratified rule sets} \cite{KR20:cores},
and recently also to define a semantics for queries with negation \cite{EKM2022:BNCQs}.
Definitions for these various dependencies have many commonalities.

Surprisingly, given this breadth of applications, rule dependencies are hardly supported in practice.
To our knowledge, positive reliances are only computed by the Graal toolkit \cite{BLMRS15:Graal}, whereas negative reliances and
restraints have no implementation at all.
A possible reason is that such dependency checks are highly intractable, typically $\SigmaTwoP$-complete,
and therefore not easy to implement efficiently. This is critical since their proposed uses are often
related to the choice of a rule-processing strategy, so that their computation adds to overall
reasoning time. Moreover, as opposed to many other static analyses, dependency computation is not mainly
an application of algorithms that are already used in rule reasoning.
Today's use of dependencies in optimisation and analysis therefore falls short of expectations.

To address this problem, we design optimised algorithms for the computation of 
positive reliances and restraints. We propose \emph{global} optimisations, which reduce the number of relevant checks,
and \emph{local} optimisations, which reduce the work needed to execute a specific check.
The latter include an improved search strategy that often avoids the full exploration of
exponentially many subsets of rule atoms, which may be necessary in the worst case.
The underlying ideas can also be adapted to negative reliances and any of the modified definitions of
positive reliances found in the literature.

We implement our methods and conduct extensive experiments with over 200 real-world ontologies of varying sizes.
Considering the effectiveness of our optimisations, we find that local and global techniques both make important contributions
to overall performance. The performance of our prototype enables various practical uses:
\begin{itemize}
\item We conduct the first analysis of the practical prevalence of \emph{core stratification} \cite{KR20:cores} using our implementation of restraints.
  We find this desirable property in a significant share of ontologies from a curated repository and provide preliminary insights on why some rule sets are not core stratified.
\item Comparing the computation of all positive reliances to Graal, we see speed-ups of more than two orders of magnitude.
 Our stronger definition yields an \emph{acyclic graph of rule dependencies} \cite{BLMS11:decline} in more cases.
\item Using the graph of positive reliances, we show how to speed up the popular but expensive rule analysis algorithm \emph{MFA} \cite{CG+13:acyclicity}.
   Compared to the MFA implementation of VLog \cite{VLog4j2019}, we observe speed-ups of up to four orders of magnitude.
   Thereby, we also show that an incorporation of our algorithms into rule reasoners is feasible.
\end{itemize}
\section{Preliminaries}\label{sec_prelims}

We build expressions from countably infinite, mutually disjoint
sets
$\Vlang$ of \emph{variables},
$\Clang$ of \emph{constants},
$\Nlang$ of \emph{labelled nulls}, and
$\Plang$ of \emph{predicate names}.
Each predicate name $p\in\Plang$ has an \emph{arity} $\arity(p)\geq 0$.
\emph{Terms} are elements of $\Vlang\cup\Nlang\cup\Clang$.
We use $\vec{t}$ to denote a list $t_1,\ldots,t_{|\vec{t}|}$ of terms, and similar for special types of terms.
An \emph{atom} is an expression $p(\vec{t})$ with $p\in\Plang$, $\vec{t}$ a list of terms, and $\arity(p)=|\vec{t}|$.
\emph{Ground} terms or atoms contain neither variables nor nulls.
An \emph{interpretation} $\Inter$ is a set of atoms without variables.
A \emph{database} $\Dnter$ is a finite set of ground atoms (i.e., a finite interpretation without nulls).

\subsubsection*{Syntax}
An \emph{existential rule} (or just \emph{rule}) $\arule$
is a formula
\begin{align}
\arule = \forall \vec{x}, \vec{y}.\, \varphi[\vec{x}, \vec{y}] \to
\exists \vec{z}.\, \psi[\vec{y}, \vec{z}],\label{eq_rule}
\end{align}
where $\varphi$ and $\psi$ are
conjunctions of atoms using only terms from $\Clang$ or from the
mutually disjoint lists of variables $\vec{x}, \vec{y}, \vec{z}\subseteq\Vlang$. 
We call $\varphi$ the \emph{body} (denoted $\rbody(\arule)$) and $\psi$ the \emph{head} (denoted $\rhead(\arule)$).
We may treat conjunctions of atoms as sets, and we omit universal quantifiers in rules.
We require that all variables in $\vec{y}$ do really occur in $\varphi$ (\emph{safety}). %
A rule is \emph{Datalog} if it has no existential quantifiers.

\subsubsection*{Semantics}
Given a set of atoms $\mathcal{A}$ and an interpretation $\Inter$,
a \emph{homomorphism} $h\colon\mathcal{A}\to\Inter$ is a function that maps the terms occurring in $\mathcal{A}$
to (the variable-free) terms occurring in $\Inter$,
such that:
(i) for all $c\in\Clang$, $h(c)=c$;
(ii) for all $p\in\Plang$, $p(\vec{t})\in\mathcal{A}$ implies $p(h(\vec{t}))\in\Inter$, where $h(\vec{t})$ is the list of $h$-images of the terms $\vec{t}$.
If, in addition, $p(h(\vec{t}))\in\Inter$ implies $p(\vec{t})\in\mathcal{A}$ (i.e., (ii) can be strengthened to an ``if, and only if''), then $h$ is \emph{strong}.
We apply homomorphisms to a formula by applying them individually to all of its terms.

A \emph{match} of a rule $\arule$ in an interpretation $\Inter$ is
a homomorphism $\rbody(\rho)\to\Inter$.
A match $h$ of $\arule$ in $\Inter$ is \emph{satisfied} if there is a homomorphism $h'\colon\rhead(\arule)\to\Inter$ that agrees
with $h$ on all variables that occur in body and head (i.e., variables $\vec{y}$ in \eqref{eq_rule}).
Rule $\arule$ is \emph{satisfied} by $\Inter$, written $\Inter\models\arule$, if every match
of $\arule$ in $\Inter$ is satisfied.
A set of rules $\aprogram$ is satisfied by $\Inter$, written $\Inter\models\aprogram$, if
 $\Inter\models\arule$ for all $\arule\in\aprogram$.
We may treat databases $\Dnter$ as sets of rules with empty bodies (also called \emph{facts}), and write, e.g.,
$\Inter\models \Dnter,\aprogram$ to express that $\Inter\models\aprogram$ and $\Dnter\subseteq\Inter$.
In this case, $\Inter$ is a \emph{model} of $\aprogram$ and $\Dnter$.

\subsubsection*{Applying rules}
A rule $\arule$ of form \eqref{eq_rule} is \emph{applicable} to an interpretation $\Inter$
if there is an unsatisfied match $h$ in $\Inter$ (i.e., $h$ cannot be extended to a homomorphism $\psi\to\Inter$).
The result of applying $\arule$ for $h$ is the interpretation $\Inter \cup \psi[h'(\vec{y}),h'(\vec{z})]$,
where $h'$ is a mapping such that $h'(y)=h(y)$ for all $y\in\vec{y}$, and for all $z\in\vec{z}$, $h'(z)\in\Nlang$ is a distinct  null not occurring in $\Inter$.
The \emph{(standard) chase} is a reasoning algorithm obtained by applying rules to a given initial database, 
such that all applicable rules are eventually applied (fairness).

\subsubsection*{Core models}
A model $\Inter$ is a \emph{core} if every homomorphism $h\colon \Inter \to \Inter$
is strong and injective. For finite models, this is equivalent to the requirement that every such homomorphism is an isomorphism,
and this will be the only case we are interested in for this work.
Intuitively, the condition states that the model does not contain a strictly smaller substructure that is
semantically equivalent for conjunctive query answering.

\subsubsection*{Unification}
For atom sets $\mathcal A$ and $\mathcal B$, partial function $m \colon \mathcal A\to\mathcal B$ is an \emph{atom mapping}, where $\dom(m)\subseteq\mathcal A$ is the set of all atoms for which $m$ is defined.
A \emph{substitution} is a function $\theta \colon \Clang\cup\Vlang\cup\Nlang \to \Clang\cup\Vlang\cup\Nlang$, such that $\theta(c)=c$ for all $c\in\Clang\cup\Nlang$.
Denote the application of $\theta$ to term $t$ by $t\theta$, naturally extending to atoms and atom sets by term-wise application.
The concatenation of substitutions $\sigma$ and $\theta$ is $\sigma\theta$ where $t\sigma\theta = (t\sigma)\theta$.
A substitution is a \emph{unifier} for atom mapping $m$ if for all $\alpha\in\dom(m)$, $\alpha\theta = (m(\alpha))\theta$.
A unifier $\mu$ for $m$ is a \emph{most general unifier} (mgu) for $m$ if for all unifiers $\nu$ of $m$, there is a substitution $\sigma$, such that $\mu\sigma=\nu$.

\section{Dependencies and their naive computation}\label{sec_deps}

We first introduce the two kinds of rule dependencies that we consider:
\emph{positive reliances} and \emph{restraints}.
Our definitions largely agree with the literature, but there are some
small differences that we comment on.

\begin{definition}\label{def_posrel}
A rule $\arule_2$ \emph{positively relies on} a rule $\arule_1$, written $\arule_1\relpos\arule_2$,
if there are interpretations $\Inter_a\subseteq\Inter_b$ and
a function $h_2$ such that
\begin{enumerate}[(a)]
\item \label{it_pos_amatch} $\Inter_b$ is obtained from $\Inter_a$ by applying $\arule_1$ for the match $h_1$ extended to $h_1'$,
\item \label{it_pos_unsat} $h_2$ is an unsatisfied match for $\arule_2$ on $\Inter_b$, and
\item \label{it_pos_nomatch} $h_2$ is not a match for $\arule_2$ on $\Inter_a$.
\end{enumerate}
\end{definition}

Definition~\ref{def_posrel} describes a situation where an application of $\arule_1$
immediately enables a new application of $\arule_2$. Condition \eqref{it_pos_unsat}
takes into account that only unsatisfied matches can lead to rule applications in the standard chase.
The same condition is used by Krötzsch \cite{KR20:cores}, whereas
Baget et al. \cite{BLMS11:decline,BLMRS15:Graal} -- using what they call \emph{piece-unifier} -- only require $h_2$ to be a match.
In general, weaker definitions are not incorrect, but may lead to unnecessary dependencies.

\begin{example}\label{ex_relpos_bleeding}
  Consider the following ontology.
  We provide three axioms in DL syntax (left-hand side) and their translation into existential rules (right-hand side).
  \begin{align*}
    A &\sqsubseteq \exists R.B   & a(x) &\rightarrow \exists v.\, r(x, v) \wedge b(v) \tag{$\arule_1$}\\
    R^- \circ R &\sqsubseteq T   & r(y, z_1) \wedge r(y, z_2) &\rightarrow t(z_1, z_2) \tag{$\arule_2$}\\
    \exists R^-.A &\sqsubseteq B & a(t) \wedge r(t, u) &\rightarrow b(u) \tag{$\arule_3$}
  \end{align*}

  \noindent For this rule set,
  we find $\arule_1 \relpos \arule_2$ by using
  $\Inter_a = \{ a(c) \}$, 
  $\Inter_b = \{ a(c), r(c,n) \}$, 
  and $h_2 = \{ y\mapsto c, z_1\mapsto n, z_2\mapsto n \}$.
  Note that $\arule_3$ does not positively rely on $\arule_1$ although the application 
  of $\arule_1$ may lead to a new match for $\arule_3$.
  However, this match is always satisfied, so condition (\ref{it_pos_unsat}) of 
  Definition~\ref{def_posrel} is not fulfilled.
\end{example}

The definition of restraints considers situations where the nulls introduced by applying rule
$\arule_2$ are at least in part rendered obsolete by a later application of $\arule_1$.
This obsolescence is witnessed by an \emph{alternative match} that specifies a different way of
satisfying the rule match of $\arule_2$.

\begin{definition}\label{def_altmatch}
Let $\Inter_a\subseteq\Inter_b$ be interpretations
such that $\Inter_a$ was obtained
by applying the rule $\arule$ for match $h$ which is extended to $h'$.
A homomorphism $h^A\colon h'(\rhead(\arule))\to \Inter_b$
is an \emph{alternative match} of $h'$ and $\rho$ on $\Inter_{b}$ if
\begin{enumerate}[(1)]
\item \label{it_altmatch_samebody} $h^A(t)=t$ for all terms $t$ in $h(\rbody(\arule))$, and
\item \label{it_altmatch_newhead} there is a null $n$ in $h'(\rhead(\arule))$ that does not occur in $h^A(h'(\rhead(\arule)))$.
\end{enumerate}
\end{definition}

Now $\arule_1$ restrains $\arule_2$ if it creates an alternative match for it:

\begin{definition}\label{def_restrained}
A rule $\arule_1$ \emph{restrains} a rule $\arule_2$, written $\arule_1\relres\arule_2$,
if there are interpretations $\Inter_a\subseteq\Inter_b$ such that
\begin{enumerate}[(a)]
\item \label{it_rest_bmatch} $\Inter_b$ is obtained by applying $\arule_1$ for match $h_1$ extended to $h_1'$,
\item \label{it_rest_amatch} $\Inter_a$ is obtained by applying $\arule_2$ for match $h_2$ extended to $h_2'$,
\item \label{it_rest_alt} there is an alternative match $h^A$ of $h_2$ and $\rho_{2}$ on $\Inter_{b}$, and
\item \label{it_rest_noalt} $h^A$ is no alternative match of $h_2$ and $\rho_{2}$ on $\Inter_b\setminus h_1'(\rhead(\arule_1))$.
\end{enumerate}
\end{definition}

Our definition slightly deviates from the literature \cite{KR20:cores},
where \eqref{it_rest_noalt} made a stronger requirement:
\begin{enumerate}[{(d')}]
\item $h_2$ has no alternative match $h_2'(\rhead(\arule_2))\to\Inter_b\setminus h_1'(\rhead(\arule_1))$.
\end{enumerate}
As we will see, our modification allows for a much more efficient implementation,
but it also leads to more restraints. Since restraints overestimate potential 
interactions during the chase anyway, all formal results of prior works are preserved.

\begin{example}\label{ex_different_restraints}
For the rules 
$\arule_1= r(y,y)\to \exists w.\, r(y,w)\wedge b(w)$ and
$\arule_2= a(x)\to \exists v.\, r(x,v)$,
we find $\arule_1\relres\arule_2$ by Definition~\ref{def_restrained},
where we set $\Inter_a=\{a(c),r(c,n_1)\}$, 
$\Inter_b=\Inter_a\cup\{r(c,c),r(c,n_2),b(n_2)\}$,
and $h^A=\{c\mapsto c, n_1\mapsto n_2\}$.
However, these $\Inter_a$ and $\Inter_b$ do not satisfy the stricter condition (d'), 
since $h^B=\{c\mapsto c, n_1\mapsto c\}$ is an alternative match, too.
Indeed, when $\arule_2$ is applicable in such a 
way as to produce an alternative match 
w.r.t. an application of $\arule_1$, 
another one must have already existed.
\end{example}

Example~\ref{ex_different_restraints} is representative of situations
where \eqref{it_rest_noalt} leads to different restraints than (d'):
the body of the restraining rule $\arule_1$ must contain a pattern that enforces an additional
alternative match (here: $r(y,y)$), while not being satisfiable by the conclusion of $\arule_2$ (here: $r(y,n_1)$).
To satisfy the remaining conditions, $\rhead(\arule_1)$ must further produce a (distinct) alternative match.
Such situations are very rare in practice, so that the benefits of \eqref{it_rest_noalt} outweigh the loss of generality.

Checking for positive reliances and restraints is $\SigmaTwoP$-complete.
Indeed, we can assume $\Inter_a$ and $\Inter_b$ to contain at most as many elements
as there are distinct terms in the rule, so that they can be polynomially guessed.
The remaining conditions can be checked by an $\NP$-oracle.
Hardness follows from the $\SigmaTwoP$-hardness of deciding if a rule has an
unsatisfied match \cite{GO:chase18}.

The existence of alternative matches in a chase sequence 
indicates that the resulting model may contain redundant nulls.
Ordering the application of rules during the chase in a way that obeys
the restraint relationship ($\relres$)
ensures that the chase sequence does not contain any alternative matches
and therefore results in a core model \cite{KR20:cores}.

\begin{example}
  Consider again the rule set from Example~\ref{ex_relpos_bleeding}.
  For the interpretation $\Inter_0 = \{a(c), r(c, d)\}$
  all three rules are applicable.
  Disregarding $\arule_3 \relres \arule_1$ 
  and applying $\arule_1$ first results in $\Inter_1 = \Inter_0 \cup \{r(c, n), b(n)\}$,
  which leads to the alternative match $h^A = \{c \mapsto c, n \mapsto d\}$ 
  after applying $\arule_3$.
  If we, on the other hand,
  start with $\arule_3$,
  we obtain $\Inter_1' = \Inter_0 \cup \{b(d)\}$.
  Rule $\arule_1$ is now satisfied and 
  the computation finishes with a core model after applying $\arule_2$.
\end{example}

The ontology from Example~\ref{ex_relpos_bleeding} 
is an example of a \emph{core stratified} rule set.
A set of rules is \emph{core stratified} 
if the graph of all $\relpos \cup \relres$
edges does not have a cycle that includes a $\relres$ edge.
This property allows us to formulate a rule application strategy that respects the restraint relationship
as follows:
Given $\arule_1 \relres \arule_2$, apply the restrained rule $\rho_2$ only 
if neither $\rho_1$ nor any of the rules 
$\rho_1$ directly or indirectly positively relies on is applicable.

\section{Computing positive reliances}\label{sec_relpos}

The observation that positive reliances can be decided in $\SigmaTwoP$ 
is based on an algorithm that considers all possible sets $\Inter_a$ and $\Inter_b$
up to a certain size. This is not practical, in particular for uses
where dependencies need to be computed as part of the (performance-critical) reasoning,
and we therefore develop a more goal-oriented approach.

In the following, we consider two rules $\arule_1$ and $\arule_2$
of form $\arule_i = \rbodyi \to \exists \vec{z}_i.\, \rheadi$,
with variables renamed so that no variable occurs in both rules.
Let $\Vlang_\forall$ and $\Vlang_\exists$, respectively, denote the sets of universally
and existentially quantified variables in $\arule_1$ and $\arule_2$.
A first insight is that the sets $\Inter_a$ and $\Inter_b$ of
Definition~\ref{def_posrel} can be assumed to contain only atoms
that correspond to atoms in $\arule_1$ and $\arule_2$, with distinct
universal or existential variables replaced by distinct constants or nulls,
respectively. For this replacement, we fix a substitution $\omega$
that maps each variable in $\Vlang_\exists$ to a distinct
null, and each variable in $\Vlang_\forall$ to a distinct
constant that does not occur in $\arule_1$ or $\arule_2$.

A second insight is that, by \eqref{it_pos_nomatch}, $\arule_1$ must produce some atoms that are relevant
for a match of $\arule_2$, so that our algorithm can specifically search for 
a \emph{mapped} subset $\rbody^m_2\subseteq\rbodyB$ and a substitution $\eta$
such that $\rbody^m_2\eta\subseteq\rheadA\eta$.
Note that $\eta$ represents both matches $h_1$ and $h_2$ from Definition~\ref{def_posrel},
which is possible since variables in $\arule_1$ and $\arule_2$ are disjoint.
The corresponding set $\Inter_a$ then is
$(\rbodyA\cup(\rbodyB\setminus\rbody^m_2))\eta\omega$.
Unfortunately, it does not suffice to consider singleton sets for $\rbody^m_2$, as shown by Example~\ref{ex_bleeding_contd}:
\begin{example}\label{ex_bleeding_contd}
  Consider the rules from Example~\ref{ex_relpos_bleeding}.
  Trying to map either one of the atoms of $\rbody(\arule_2)$ to $\rhead(\arule_1)$ yields an $\Inter_a=\{ a(c), r(c,c') \}$,
  to which $\arule_1$ is not applicable.
  The correct $\Inter_a=\{a(c)\}$ as given in Example~\ref{ex_relpos_bleeding} is found by unifying both atoms of $\rbody(\arule_2)$ with (an instance of) $\rhead(\arule_1)$.
\end{example}
Therefore, we have to analyse all subsets $\rbody^m_2\subseteq\rbodyB$ for possible matches
with $\rheadA$.
We start the search from singleton sets, which are successively extended by adding atoms.
A final important insight is that this search can often be aborted early, since
a candidate pair for $\Inter_a$ and $\Inter_b$ may fail Definition~\ref{def_posrel} for
various reasons, and considering a larger $\rbody^m_2$ is not always promising.
For example, if $\eta$ is a satisfied match for $\arule_2$ over $\Inter_b$ \eqref{it_pos_unsat},
then adding more atoms to $\rbody^m_2$ will never succeed.

\DontPrintSemicolon
\SetKwFunction{checkpos}{check$^+$}
\SetKwFunction{extendpos}{extend$^{+\!}$}
\SetKwFunction{unify}{unify}
\SetKwFunction{maxidx}{maxidx}
\begin{algorithm}[tb] \caption{\FuncSty{extend$^+$}}\label{alg_extend_positive}

  \KwIn{rules $\arule_1,\arule_2$, atom mapping $m$}
  \KwOut{\textit{true} iff the atom mapping can be extended successfully}

  \For{$i \in \{\maxidx{m}+1, \dots, |\rbodyB|\}$} {\label{alg:extend:for_i}
    \For{$j  \in \{1, \dots, |\rheadA|\}$}{
      $m' \leftarrow m \cup \{\rbodyB[i] \mapsto \rheadA[j]\omega_\exists\}$\; \label{alg:extend:mprime}
      
      \If{$\eta \leftarrow{}$\unify{$m'$}} { \label{alg:extend:unify}
        \lIf{\checkpos{$\arule_1$,$\arule_2$,$m'$,$\eta$}} {\label{alg:extend:check}%
          \ghost{\KwRet{true}}
        }
      }
    }
  }

  \KwRet{false}\;
\end{algorithm}

\begin{algorithm}[tb] \caption{\FuncSty{check$^+$}}\label{alg_check_positive}

  \KwIn{rules $\arule_1,\arule_2$, atom mapping $m$ with mgu $\eta$}
  \KwOut{\textit{true} if a positive reliance is found for $m$}

  $\rbody^m_2 \leftarrow \dom(m)$\; \label{alg:check_positive:defphi21}
  $\rbody^\ell_2 \leftarrow \ghost{$\{\rbodyB[j]\in(\rbodyB{\setminus}\,\rbody^m_2)\mid j\,{<}\,\maxidx{m}\}$}$\; \label{alg:check_positive:defphi22left}
  $\rbody^r_2 \leftarrow \ghost{$\{\rbodyB[j]\in(\rbodyB{\setminus}\,\rbody^m_2)\mid j\,{>}\,\maxidx{m}\}$}$\; \label{alg:check_positive:defphi22right}
  \lIf{$\rbodyA\eta$ contains a null}{\label{alg:check_positive:noNulls1}%
   \ghost{\KwRet{false}}\label{alg:check_positive:noNulls1_return}
  }
  \lIf{$\rbody^\ell_2\eta$ contains a null}{%
    \KwRet{false}\label{alg:check_positive:noNulls22_left}
  }
  \lIf{$\rbody^r_2\eta$ contains a null}{\label{alg:check_positive:noNulls22}%
    \ghost{\KwRet{\extendpos{$\arule_1$,$\arule_2$,$m$}}} \label{alg:check_positive:noNulls22_return}
  }

  $\mathcal{I}_a \leftarrow  (\rbodyA \cup \rbody^\ell_2\cup \rbody^r_2)\eta\omega$\; \label{alg:check_positive:Ia}

  \lIf{$\mathcal{I}_a \models  \exists\vec{z}_1.\, \rheadA\eta\omega_\forall$}{\label{alg:check_positive:modelsPsi1}%
    \ghost{\KwRet{\extendpos{$\arule_1$,$\arule_2$,$m$}}}
  }

  \lIf{$\rbodyB\eta\omega \subseteq \mathcal{I}_a$}{\label{alg:check_positive:modelsPhi2}%
    \ghost{\KwRet{\extendpos{$\arule_1$,$\arule_2$,$m$}}}
  }

  $\mathcal{I}_b \leftarrow \mathcal{I}_a \cup \rheadA\eta\omega$\; \label{alg:check_positive:Ib}

  \lIf{$\mathcal{I}_b \models \exists\vec{z}_2.\, \rheadB\eta\omega_\forall$ }{\label{alg:check_positive:modelsPsi2}%
    \KwRet{false} \label{alg:check_positive:modelsPsi2_return}
  }

  \KwRet{true}\; \label{alg:check_positive:return_true}
\end{algorithm}
These ideas are implemented in Algorithms~\ref{alg_extend_positive} (\FuncSty{extend$^+$}) and \ref{alg_check_positive}
(\FuncSty{check$^+$}), explained next.
For a substitution $\theta$, we write $\theta_\forall$ ($\theta_\exists$,
resp.), to denote
the substitution assigning existential variables (universal variables, resp.) to themselves,
and otherwise agrees with $\theta$.

Function \FuncSty{extend$^+$} iterates over extensions of a given candidate set.
To specify how atoms of $\rbodyB$ are mapped to $\rheadA$, we maintain
an atom mapping $m\colon \rbodyB\to\rheadA$
whose domain $\dom(m)$ corresponds to the chosen $\rbody^m_2\subseteq\rbodyB$.
To check for the positive reliance, we initially call \extendpos{$\arule_1$,$\arule_2$,$\emptyset$}.
Note that $\arule_1$ and $\arule_2$ can be based on the same rule (a rule can positively rely on
itself); we still use two variants that ensure disjoint variable names.

We treat rule bodies and heads as lists of atoms, and write $\varphi[i]$ for the
$i$th atom in $\varphi$. The expression $\maxidx{m}$ returns the largest index of an
atom in $\dom(m)$, or $0$ if $\dom(m)=\emptyset$. By extending $m$ only with atoms of larger
index (L\ref{alg:extend:for_i}), we ensure that each $\dom(m)$ is only considered
once. We then construct each possible extension of $m$ (L\ref{alg:extend:mprime}),
where we replace existential variables by fresh nulls in $\rheadA$.
In Line~\ref{alg:extend:unify}, \unify{$m'$} is the most general unifier $\eta$ of $m'$
or undefined if $m'$ cannot be unified.
With variables, constants, and nulls as the only terms, unification is an easy polynomial algorithm.

Processing continues with \FuncSty{check$^+$}, called in Line~\ref{alg:extend:check} of \FuncSty{extend$^+$}.
We first partition $\rbodyB$ into the matched atoms $\rbody^m_2$, and the remaining atoms to the left $\rbody^\ell_2$ and right $\rbody^r_2$
of the maximal index of $m$. Only $\rbody^r_2$ can still be considered for extending $m$.
Six if-blocks check all conditions of Definition~\ref{def_posrel}, and \textit{true} is returned if all checks succeed.
When a check fails, the search is either stopped (L\ref{alg:check_positive:noNulls1}, L\ref{alg:check_positive:noNulls22_left}, and L\ref{alg:check_positive:modelsPsi2})
or recursively continued with an extended mapping (L\ref{alg:check_positive:noNulls22},
L\ref{alg:check_positive:modelsPsi1}, and L\ref{alg:check_positive:modelsPhi2}).
The three checks in L\ref{alg:check_positive:noNulls1}--L\ref{alg:check_positive:noNulls22}
cover cases where $\Inter_a$ (L\ref{alg:check_positive:Ia}) would need to contain nulls that are
freshly introduced by $\arule_1$ only later.
L\ref{alg:check_positive:noNulls1} applies, e.g., when checking
$\arule_2\relpos\arule_1$ for $\arule_1,\arule_2$ as in Example~\ref{ex_different_restraints},
where we would get $a(n)\in\Inter_a$ (note the swap of rule names compared to our present algorithm).
Further extensions of $m$ are useless for L\ref{alg:check_positive:noNulls1}, since they could only lead to more specific unifiers,
and also for L\ref{alg:check_positive:noNulls22_left}, where nulls occur in ``earlier'' atoms that are not considered in extensions of $m$.
For case L\ref{alg:check_positive:noNulls22}, however, moving further atoms from $\rbody^r_2$ to $\rbody^m_2$ might be promising,
so we call \FuncSty{extend$^+$} there.

In L\ref{alg:check_positive:modelsPsi1}, we check if the constructed match of $\arule_1$ on $\Inter_a$ is already satisfied.
This might again be fixed by extending the mapping, since doing so makes $\rbody^r_2$ and hence $\Inter_a$ smaller.
If we reach L\ref{alg:check_positive:modelsPhi2}, we have established condition \eqref{it_pos_amatch} of Definition~\ref{def_posrel}.
L\ref{alg:check_positive:modelsPhi2} then ensures condition \eqref{it_pos_nomatch}, which might
again be repaired by extending the atom mapping so as to make $\Inter_a$ smaller.
Finally, L\ref{alg:check_positive:modelsPsi2} checks condition \eqref{it_pos_unsat}. If this fails,
we can abort the search: unifying more atoms of $\rbodyB$ with $\rheadA$ will only lead
to a more specific $\Inter_b$ and $\eta$, for which the check would still fail.

\begin{restatable}{theorem}{thmRelPosCorrect}\label{theo_relpos_correct}
For rules $\arule_1$ and $\arule_2$ that (w.l.o.g.) do not share variables,
$\arule_1\relpos\arule_2$ iff \extendpos{$\arule_1$,$\arule_2$,$\emptyset$}${}=\textit{true}$.  
\end{restatable}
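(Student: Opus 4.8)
The plan is to prove the two directions of the equivalence separately; two preliminary observations drive the whole argument. First, I would note that \FuncSty{extend$^+$} terminates: every recursive call made from \FuncSty{check$^+$} (L\ref{alg:check_positive:noNulls22}, L\ref{alg:check_positive:modelsPsi1}, L\ref{alg:check_positive:modelsPhi2}) is passed a strictly larger atom mapping, and the loop bound in L\ref{alg:extend:for_i} ensures each $\dom(m)\subseteq\rbody_2$ is visited at most once, so there are only finitely many calls and the statement is well posed. Second -- the technical engine -- I would record a \emph{monotonicity lemma}: if $m\subseteq m'$ are unifiable atom mappings with mgus $\eta$ and $\eta'$, then $\eta'=\eta\sigma$ for some substitution $\sigma$ (every unifier of $m'$ unifies $m$); hence, as $m$ grows, $\eta$ becomes more specific, the set $\rbody^r_2$ of ``right'' atoms shrinks, and $\Inter_a=(\rbody_1\cup\rbody^\ell_2\cup\rbody^r_2)\eta\omega$ of L\ref{alg:check_positive:Ia} becomes a homomorphic image of the corresponding set for every extension of $m$.

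For \textbf{soundness} ($\Leftarrow$), I would assume \FuncSty{extend$^+$}$(\arule_1,\arule_2,\emptyset)$ returns \textit{true}, so that some call of \FuncSty{check$^+$} with mapping $m$ and mgu $\eta$ reaches L\ref{alg:check_positive:return_true}, and then take $\Inter_a$, $\Inter_b$ as built in L\ref{alg:check_positive:Ia} and L\ref{alg:check_positive:Ib}, with $h_1'$ the restriction of $\eta\omega$ to the variables of $\arule_1$ (its restriction to body variables being the match $h_1$) and $h_2$ the restriction of $\eta\omega$ to the variables of $\arule_2$. It then remains to verify the three conditions of Definition~\ref{def_posrel}: \eqref{it_pos_amatch} holds because $\rbody_1\eta\omega\subseteq\Inter_a$ makes $h_1$ a match, the null tests L\ref{alg:check_positive:noNulls1}--L\ref{alg:check_positive:noNulls22} (which did not abort) ensure $\Inter_a$ contains no null that $\rhead_1$ only introduces later, the test in L\ref{alg:check_positive:modelsPsi1} (not fired) says $h_1$ is unsatisfied on $\Inter_a$, and $\Inter_b=\Inter_a\cup\rhead_1\eta\omega$ is precisely the result of applying $\arule_1$ for $h_1$; \eqref{it_pos_nomatch} is the negation of the test in L\ref{alg:check_positive:modelsPhi2}; and for \eqref{it_pos_unsat}, $h_2$ is a match of $\arule_2$ on $\Inter_b$ since by construction of $m$ and $\eta$ the matched atoms $\rbody^m_2$ map into $\rhead_1\eta\omega\subseteq\Inter_b$ while $(\rbody^\ell_2\cup\rbody^r_2)\eta\omega\subseteq\Inter_a$, and it is unsatisfied since L\ref{alg:check_positive:modelsPsi2} did not fire. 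Hence $\arule_1\relpos\arule_2$.

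For \textbf{completeness} ($\Rightarrow$), I would start from witnesses $\Inter_a\subseteq\Inter_b$ and matches $h_1$ (extended to $h_1'$) and $h_2$, and first \emph{normalise} them: replacing each null occurring in $\Inter_a$ by a fresh constant (propagating to $\Inter_b$) preserves all three conditions, since the existence of homomorphisms is insensitive to the constant/null distinction; and $\Inter_a$ may be shrunk to $h_1'(\rbody_1)$ together with the $h_2$-images of the body atoms of $\arule_2$ that are not produced by $\arule_1$. Condition \eqref{it_pos_nomatch} forces at least one body atom of $\arule_2$ to be mapped by $h_2$ into $h_1'(\rhead_1)$, which pins down a nonempty $\rbody^m_2\subseteq\rbody_2$ and an atom mapping $m^*$ into $\rhead_1$ unified by $h_1'\cup h_2$; so the mgu $\eta^*$ of $m^*$ exists and, by the monotonicity lemma, $h_1'\cup h_2$ factors through $\eta^*$ up to the fixed renaming $\omega$. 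I would then trace the \emph{canonical run} that starts from $\emptyset$ and adds the atoms of $\rbody^m_2$ one at a time in increasing index order. Because atoms are added in index order, every intermediate mapping has its $\rbody^\ell_2$ inside the ``left'' part of $m^*$ and its $\rbody^r_2$ inside the union of the ``left'' and ``right'' parts of $m^*$; combined with null-freeness of the normalised $\Inter_a$ and the monotonicity lemma, this shows that none of the \emph{stopping} aborts L\ref{alg:check_positive:noNulls1}, L\ref{alg:check_positive:noNulls22_left}, L\ref{alg:check_positive:modelsPsi2} can fire before $m^*$ is reached -- a stop at L\ref{alg:check_positive:noNulls1} or L\ref{alg:check_positive:noNulls22_left} would force a null into the $\Inter_a$ of the witness configuration, and a stop at L\ref{alg:check_positive:modelsPsi2} would make $h_2$ already satisfied on a homomorphic image of the witness $\Inter_b$, contradicting \eqref{it_pos_unsat}. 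Finally, at $m^*$ all six if-blocks of \FuncSty{check$^+$} take their non-aborting branch: null-freeness disposes of L\ref{alg:check_positive:noNulls1}--L\ref{alg:check_positive:noNulls22}, and if L\ref{alg:check_positive:modelsPsi1}, L\ref{alg:check_positive:modelsPhi2}, or L\ref{alg:check_positive:modelsPsi2} fired, then composing with the factoring homomorphism would contradict, respectively, that $h_1$ is unsatisfied on the witness $\Inter_a$, that $h_2$ is not a match on the witness $\Inter_a$, or that $h_2$ is unsatisfied on the witness $\Inter_b$; so L\ref{alg:check_positive:return_true} is reached and the algorithm returns \textit{true}.

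The hard part will be the completeness direction, and in particular showing that the local early-abort rules never discard a branch that would have yielded a reliance. This hinges on setting up the witness normalisation so that ``$\Inter_a$ null-free'' is genuinely without loss of generality, and on the monotonicity lemma tying mgus of nested atom mappings to the shrinking of $\Inter_a$ and $\rbody^r_2$; the bookkeeping that an intermediate $\rbody^\ell_2$ along the canonical run stays inside the ``left'' part of $m^*$ -- which relies on atoms being added in increasing index order -- is the easiest place to slip. Everything else (termination, soundness, and the final ``all checks pass at $m^*$'' step) is routine homomorphism composition.
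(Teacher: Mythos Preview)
Your proposal is correct and follows essentially the same approach as the paper's proof: both directions are organised identically, with soundness verified by reading off Definition~\ref{def_posrel} from the six checks, and completeness obtained by extracting a target atom mapping $m^*$ from the witness, showing all checks succeed at $m^*$, and arguing that the early-abort lines L\ref{alg:check_positive:noNulls1}, L\ref{alg:check_positive:noNulls22_left}, L\ref{alg:check_positive:modelsPsi2} never fire along the canonical chain of sub-mappings. Your ``monotonicity lemma'' and ``factoring homomorphism'' play exactly the role of the paper's two auxiliary lemmas (that $h$ factors through the mgu via some $\tau$, and that satisfaction in the algorithm's interpretation transfers back to the witness); your normalisation of $\Inter_a$ to be null-free is a cosmetic variant of the paper's assumption that $h_1',h_2'$ send existentials to $\omega_\exists(v)$, and your direct appeal to the witness $\Inter_b$ for the L\ref{alg:check_positive:modelsPsi2} case is a slight shortcut over the paper's detour via the final $\Inter_b^m$.
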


\section{Computing restraints}\label{sec_relres}

\SetKwFunction{checkres}{check$^\square$}
\SetKwFunction{extendres}{extend$^\square$}
We now turn our attention to the efficient computation of restraints.
In spite of the rather different definitions, many of the ideas from Section~\ref{sec_relpos} can also be
applied here. The main observation is that the search for an alternative match
can be realised by unifying a part of $\rheadB$ with $\rheadA$ in a way that resembles
our unification of $\rbodyB$ with $\rheadA$ in Section~\ref{sec_relpos}.

To realise this, we define a function \FuncSty{extend$^\square$} as a small modification of
Algorithm~\ref{alg_extend_positive}, where we simply replace
$\rbodyB$ in L\ref{alg:extend:for_i} and L\ref{alg:extend:mprime} by
$\rheadB$, and \FuncSty{check$^+$} in L\ref{alg:extend:check} by \FuncSty{check$^\square$},
which is defined in Algorithm~\ref{alg_check_restraint} and explained next.

\begin{algorithm}\caption{\FuncSty{check$^\square$}}\label{alg_check_restraint}

  \KwIn{rules $\arule_1,\arule_2$, atom mapping $m$ with mgu $\eta$}
  \KwOut{\textit{true} if a restraint is found for $m$}
  
  $\rhead^m_2 \leftarrow \dom(m)$\; \label{alg:check_restraint:defpsi21}
  $\rhead^\ell_2 \leftarrow \ghost{$\{\rheadB[j]\in(\rheadB{\setminus}\,\rhead^m_2)\mid j\,{<}\,\maxidx{m}\}$}$\; \label{alg:check_restraint:defpsi22left}
  $\rhead^r_2 \leftarrow \ghost{$\{\rheadB[j]\in(\rheadB{\setminus}\,\rhead^m_2)\mid j\,{>}\,\maxidx{m}\}$}$\; \label{alg:check_restraint:defpsi22right}
  
  \lIf{$x\eta\in\Nlang$ for some $x\in\Vlang_\forall$}{\label{alg:check_restraint:noUniversalNull}%
    \KwRet{false} \label{alg:check_restraint:noUniversalNull_return}
  }

  \lIf{$z\eta\in\Nlang$ for some $z\in\Vlang_\exists$ in $\rhead^\ell_2$}{\label{alg:check_restraint:noNullsPsi22_left}%
    \ghost{\KwRet{false}}
  }
  
  \If{$z\eta\in\Nlang$ for some $z\in\Vlang_\exists$ in $\rhead^r_2$}{\label{alg:check_restraint:noNullsPsi22}%
    \KwRet{\extendres{$\arule_1$,$\arule_2$,$m$}}\;
  }

  \If{$\rhead^m_2$ contains no existential variables}{\label{alg:check_restraint:containsExists}
    \KwRet{\extendres{$\arule_1$,$\arule_2$,$m$}}\;
  }

  $\tilde{\mathcal{I}}_a \leftarrow \rbodyB\eta_\forall\omega_\forall$\;\label{alg:check_restraint:assignIaTilde}
  \lIf{$\tilde{\mathcal{I}}_a \models \exists\vec{z}_2.\, \rheadB\eta_\forall \omega_\forall$}{\label{alg:check_restraint:modelsPsi2}
    \KwRet{false} \label{alg:check_restraint:modelsPsi2_return}
  }

  $\mathcal{I}_a \leftarrow \tilde{\mathcal{I}}_a \cup \rheadB\eta_\forall \omega$\;\label{alg:check_restraint:assignIa}
  $\tilde{\mathcal{I}}_b \leftarrow \mathcal{I}_a \cup (\rbodyA \cup \rhead^\ell_2\cup \rhead^r_2)\eta\omega$\label{alg:check_restraint:assignIbTilde}

  \lIf{$\tilde{\mathcal{I}}_b \models \exists\vec{z}_1.\, \rheadA\eta_\forall\omega_\forall$}{\label{alg:check_restraint:models}%
    \ghost{\KwRet{\extendres{$\arule_1$,$\arule_2$,$m$}}}
  }
  \lIf{$\rheadB\eta\omega \subseteq \tilde{\mathcal{I}}_b$}{\label{alg:check_restraint:given}%
    \KwRet{\extendres{$\arule_1$,$\arule_2$,$m$}}
  }

  \KwRet{true}\;\label{alg:check_restraint:returnTrue}
\end{algorithm}
We use the notation for $\arule_1$, $\arule_2$, $\omega$, $\Vlang_\exists$, and $\Vlang_\forall$
as introduced in Section~\ref{sec_relpos}, and again use atom mapping $m$ to represent our current
hypothesis for a possible match.
What is new now is that unified atoms in $\dom(m)$ can contain existentially quantified variables,
though existential variables in the range of $m$ (from $\rheadA$) are still replaced by nulls
as in Algorithm~\ref{alg_extend_positive}, L\ref{alg:extend:check}.
An existential variable in $\rheadB$ might therefore be unified with a constant, null, or universal variable of $\rheadA$.
In the last case, where we need a unifier $\eta$ with $z\eta = x\eta$ for $z\in\Vlang_\exists$ and $x\in\Vlang_\forall$,
we require that $x\eta=z\eta\in\Vlang_\forall$ so that $\eta$ only maps to variables
in $\Vlang_\forall$.
The unifier $\eta$ then simultaneously represents the matches $h_1$, $h_2$, and $h^A$ from Definition~\ref{def_restrained}.

\begin{example}
For rules $\arule_1= r(x,y) \to s(x,x,y)$ and
$\arule_2= a(z) \to \exists v.\, s(z,v,v)\wedge b(v)$, and mapping $m=\{s(z,v,v)\mapsto s(x,x,y)\}$,
we obtain a unifier $\eta$ that maps all variables to $x$ (we could also use $y$, but not the existential $v$).
Let $x\omega=c$ be the constant that $x$ is instantiated with.
Then we can apply $\arule_2$ to $\tilde{\Inter}_a = \{ a(z)\eta\omega \} = \{ a(c) \}$ with match
$h_2=\{z\mapsto c, v\mapsto n \}$ to get $\Inter_a=\tilde{\Inter}_a\cup\{ s(c,n,n),b(n)\}$,
and $\arule_1$ to $\tilde{\Inter}_b=\Inter_a\cup\{r(c,c),b(c)\}$ with match $h_1=\{x\mapsto c, y\mapsto c\}$
to get $\Inter_b=\tilde{\Inter}_b\cup\{s(c,c,c)\}$. Note that we had to add $b(c)$ to 
obtain the required alternative match $h^A$, which maps $n$ to $v\eta\omega=c$ and $c$ to itself.
\end{example}

As in the example, a most general unifier $\eta$ yields a candidate $h^A$ that 
maps every null of the form $v\omega_\exists$ to $v\eta_\exists\omega_\forall$.
Likewise, for $i\in\{1,2\}$, $h_i=\eta_\forall\omega$ are the (extended) matches,
while $\eta_\forall\omega_\forall$ are the body matches.
The image of the instantiated $\rheadB\eta_\forall\omega$ under the alternative match $h^A$
is given by $\rheadB\eta\omega$.
The corresponding interpretations are
$\Inter_a= \rbodyB\eta_\forall\omega_\forall \cup \rheadB\eta_\forall\omega$
and
$\Inter_b= \Inter_a\cup \rbodyA\eta_\forall\omega_\forall \cup \rheadA\eta_\forall\omega \cup (\rhead\setminus\dom(m))\eta\omega$,
where $(\rheadB\setminus\dom(m))\eta\omega$ provides additional atoms required for the alternative match but
not in the mapped atoms of $\rheadB$.
With these intuitions, Algorithm~\ref{alg_check_restraint} can already be understood.

It remains to explain the conditions that are checked before returning \textit{true}.
As before, we partition $\dom(m)$ into mapped atoms $\rhead^m_2$ and left and right remainder atoms.
Checks in L\ref{alg:check_restraint:noUniversalNull}--L\ref{alg:check_restraint:noNullsPsi22}
ensure that the only variables mapped by $\eta$ to nulls (necessarily from $\rheadA\omega_\exists$)
are existential variables in $\rhead^m_2$: such mappings are possible by $h^A$.
Extending $m$ further is only promising if the nulls only stem from atoms in $\rhead^r_2$.

Check L\ref{alg:check_restraint:containsExists} continues the search when no atoms with existentials
have been selected yet. Selecting other atoms first might be necessary by our order, but no
alternative matches can exist for such mappings (yet).
Lines L\ref{alg:check_restraint:modelsPsi2} and L\ref{alg:check_restraint:models} check that
the matches $h_1$ and $h_2$ are indeed unsatisfied.
Extending $m$ might fix L\ref{alg:check_restraint:modelsPsi2} by making $\tilde{\Inter}_a$ smaller,
whereas L\ref{alg:check_restraint:models} cannot be fixed.
Finally, L\ref{alg:check_restraint:given} ensures condition \eqref{it_rest_noalt} of Definition~\ref{def_restrained}.

\begin{example}
  Consider rules $\arule_1 = b(x,y) \rightarrow r(x,y,x,y) \wedge q(x,y)$, $\arule_2 = a(u,v) \rightarrow \exists w.\, r(u,v,w,w) \wedge r(v,u,w,w)$, and mapping $m = \{ r(u,v,w,w)\mapsto r(x,y,x,y) \}$.
  We obtain unifier $\eta$ mapping all variables to a single universally quantified variable, say $x$.
  We reach $\tilde\Inter_b = \{ a(c,c), r(c,c,n,n), b(c,c), r(c,c,c,c) \}$, based on $\tilde\Inter_a = \{ a(c,c) \}$ ($x\omega = c$), for which $\arule_1$ is applicable but $h^A = \{ n\mapsto c, c\mapsto c \}$ is already an alternative match on $\tilde\Inter_b$, recognized by L\ref{alg:check_restraint:given}.
\end{example}
\begin{restatable}{theorem}{thmRelResCorrect}\label{theo_relres_correct}
For rules $\arule_1$ and $\arule_2$ that (w.l.o.g.) do not share variables,
$\arule_1\relres\arule_2$ holds according to Definition~\ref{def_restrained} for some $\Inter_a\neq\Inter_b$ iff \extendres{$\arule_1$,$\arule_2$,$\emptyset$}${}=\textit{true}$.
\end{restatable}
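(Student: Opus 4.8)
The plan is to prove both directions by relating runs of \FuncSty{extend$^\square$}/\FuncSty{check$^\square$} to witnesses $(\Inter_a,\Inter_b,h_1,h_2,h^A)$ of Definition~\ref{def_restrained}. The conceptual core is a \emph{normal-form lemma}: if $\arule_1\relres\arule_2$ with $\Inter_a\neq\Inter_b$, then there is a witness whose atoms are exactly instantiations of the rule atoms under a single substitution, and in which $h^A$ nontrivially ``re-uses'' atoms produced by $\arule_1$. Concretely, I would argue that from any witness we may restrict $\Inter_b$ to the atoms actually involved (the image of $\rhead(\arule_1)$, of $\rhead(\arule_2)$, and of the two bodies) and collapse terms along the homomorphisms $h_1,h_2,h^A$; because $h^A$ fixes all body terms of $h_2$ (condition~\eqref{it_altmatch_samebody}) and must send some null of $h_2'(\rhead(\arule_2))$ onto an already-present term (condition~\eqref{it_altmatch_newhead} together with \eqref{it_rest_alt}), the relevant identifications are captured by unifying a nonempty subset of $\rhead(\arule_2)$ with (a null-instantiated copy of) $\rhead(\arule_1)$. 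This is the same unification pattern as in Section~\ref{sec_relpos}, and it is what the atom mapping $m$ of the algorithm encodes; the mgu $\eta$ then yields the \emph{most general} such witness, which by monotonicity of all the conditions is the one worth checking.

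For the ``if'' direction (soundness) I would proceed by induction on the recursion depth of \FuncSty{extend$^\square$}. When \extendres returns \textit{true}, it does so in L\ref{alg:check_restraint:returnTrue} of some \FuncSty{check$^\square$} call with mapping $m$ and mgu $\eta$ that has passed all intervening checks. I then exhibit the explicit witness: set $\Inter_a = \rbodyB\eta_\forall\omega_\forall \cup \rheadB\eta_\forall\omega$ and $\Inter_b = \Inter_a\cup\rbodyA\eta_\forall\omega_\forall\cup\rheadA\eta_\forall\omega\cup(\rheadB\setminus\dom(m))\eta\omega$, with $h_i=\eta_\forall\omega$ as matches and $h^A$ the map sending each null $v\omega_\exists$ to $v\eta_\exists\omega_\forall$ (identity on other terms), exactly as the prose preceding the algorithm describes. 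One checks: $\Inter_a$ is a legal result of applying $\arule_2$ and $\Inter_b$ of subsequently applying $\arule_1$ (using that L\ref{alg:check_restraint:modelsPsi2} and L\ref{alg:check_restraint:models} guarantee the respective matches are unsatisfied, so both applications are genuine, and that L\ref{alg:check_restraint:noUniversalNull}--L\ref{alg:check_restraint:noNullsPsi22} guarantee $\eta$ sends $\Vlang_\forall$ to non-nulls so $\Inter_a\subseteq\Inter_b$ is a well-formed chase step); that $h^A$ is an alternative match on $\Inter_b$ — it is a homomorphism by construction of $\eta$ from $m$, it fixes body terms since $\eta_\forall$ fixes universals, and L\ref{alg:check_restraint:containsExists} ensures $\dom(m)$ contains an existential so condition~\eqref{it_altmatch_newhead} holds; and that $h^A$ is \emph{not} an alternative match on $\Inter_b\setminus h_1'(\rhead(\arule_1)) = \Inter_a\cup(\rheadB\setminus\dom(m))\eta\omega$, which is precisely the negation tested in L\ref{alg:check_restraint:given}. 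For the recursive \KwRet cases the claim is immediate from the induction hypothesis.

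For the ``only if'' direction (completeness) I take a witness in the normal form above and must show the algorithm reaches a \textit{true}. Let $\rhead^m_2\subseteq\rheadB$ be the minimal subset of head atoms of $\arule_2$ whose images under $h^A$ lie in $h_1'(\rhead(\arule_1))$ and that includes an atom carrying the obsolete null $n$; let $\eta^*$ be the induced unifier. I show that the \emph{enumeration order} of \FuncSty{extend$^\square$} eventually builds an $m$ with $\dom(m)$ equal to some superset of this target (atoms are added in increasing index, so any desired $\dom(m)$ is reachable unless an earlier \FuncSty{check$^\square$} aborts), and that none of the abort lines (L\ref{alg:check_restraint:noUniversalNull}, L\ref{alg:check_restraint:noNullsPsi22_left}, L\ref{alg:check_restraint:modelsPsi2}, L\ref{alg:check_restraint:models}, L\ref{alg:check_restraint:given}) can fire on the path to it: for each abort I argue that firing it would contradict the existence of the assumed witness (e.g.\ L\ref{alg:check_restraint:modelsPsi2} firing would mean $h_2$ is satisfiable over $\tilde\Inter_a$, so $\Inter_a$ would not be a genuine application of $\arule_2$; L\ref{alg:check_restraint:given} firing would mean $h^A$ is already an alternative match without the atoms of $\arule_1$, contradicting \eqref{it_rest_noalt}; L\ref{alg:check_restraint:noUniversalNull} firing would make $\Inter_a\not\subseteq\Inter_b$ a non-step). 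Since the mgu is most general, any abort that \emph{can} be ``fixed by a smaller $\Inter_a$'' (the recursive cases) only shrinks $\rhead^r_2$ and is monotone, so the search does not lose the witness; on the branch matching $\eta^*$ all checks pass and L\ref{alg:check_restraint:returnTrue} fires.

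The main obstacle I anticipate is the completeness direction, specifically the bookkeeping around the atom-ordering discipline: because \FuncSty{extend$^\square$} only ever extends $m$ with atoms of \emph{larger} index, one must show that the particular $\dom(m)$ needed for the witness is reached along \emph{some} branch before any premature abort, and that abort conditions which split $\rheadB$ into $\rhead^\ell_2$ (unfixable) versus $\rhead^r_2$ (recursable) never place a ``needed'' atom on the wrong side. This requires being careful that the minimal witness subset can always be assumed to be an \emph{upward-closed-enough} prefix-compatible set, or else arguing the algorithm is correct because it explores all index-monotone chains and the needed set is one of them. A secondary subtlety, parallel to the deviation noted in the paper, is checking that using the weaker condition~\eqref{it_rest_noalt} (rather than (d')) is exactly mirrored by testing a single $h^A$ in L\ref{alg:check_restraint:given} rather than quantifying over all alternative matches; this is where the $\Inter_a\neq\Inter_b$ hypothesis in the theorem statement is used, ruling out the degenerate witness.
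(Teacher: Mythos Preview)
Your proposal follows the same two–part structure as the paper's proof, and your soundness direction is correct and essentially matches the paper. Two points on completeness are worth sharpening.

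First, a structural correction: you list L\ref{alg:check_restraint:models} and L\ref{alg:check_restraint:given} among the ``abort lines'' that must not fire on the path to the target $m$, but these lines return \extendres\ (i.e., recurse), not \textit{false}. On the path to $m$ you only need to show that the genuine aborts (L\ref{alg:check_restraint:noUniversalNull}, L\ref{alg:check_restraint:noNullsPsi22_left}, L\ref{alg:check_restraint:modelsPsi2}) do not fire for any prefix $m'\subsetneq m$; a recursive line firing at $m'$ is precisely what lets the search proceed towards $m$. Your arguments for L\ref{alg:check_restraint:models} and L\ref{alg:check_restraint:given} are the right ones, but they belong to the separate task of showing that \emph{all} checks fail at the target $m$ itself, so that L\ref{alg:check_restraint:returnTrue} is reached.

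Second, the paper does not go through a ``normal-form lemma'' that shrinks and canonicalises the witness. It works directly with the given witness $(\mathcal{J}_a,\mathcal{J}_b,h_1,h_2,h^A)$: one defines a single substitution $h$ combining $h_1'$ and $h^A\circ h_2'$, observes that $h$ unifies the relevant atom mapping, and then relates $h$ to the mgu $\eta$ via a factoring function $\tau$ with $h\subseteq\tau\circ(\eta\omega)$ (Lemma~\ref{Lemma_Tau}). The entailment checks at L\ref{alg:check_restraint:modelsPsi2} and L\ref{alg:check_restraint:models} are then discharged by a generic lifting lemma (Lemma~\ref{Lemma_Completeness}): if the actual match $h$ is unsatisfied on $\mathcal{J}$, then the generic match $\eta\omega$ is unsatisfied on the corresponding $\mathcal{A}\eta\omega$. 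Your normal-form route could be made to work, but it does more than necessary; the $\tau$-factoring is lighter and, crucially, is reused verbatim for the path argument (showing the aborts fail for every $m'\subseteq m$ just instantiates the same two lemmas with the mgu $\eta'$ of $m'$ in place of $\eta$), which dissolves exactly the bookkeeping obstacle you flag at the end.
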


The case $\Inter_a=\Inter_b$, which Theorem~\ref{theo_relres_correct} leaves out, is possible
\cite[Example~5]{KR20:cores}, but requires a slightly different algorithm.
We can adapt Algorithm~\ref{alg_check_restraint} by restricting to one rule, for which we map
from atoms in $\rhead$ to atoms in $\rhead\omega_\exists$. The checks (for $\rheadB$) of 
Algorithm~\ref{alg_check_restraint} remain as before, but we only need to compute a single
$\Inter$ that plays the role of $\Inter_a$ and $\Inter_b$.
Check L\ref{alg:check_restraint:given} is replaced by a new check\\[1ex]
\mbox{}~~~\lIf{$\rhead\,\eta_\exists = \rhead\,\omega_\exists$}{%
      \KwRet{false}
    }\smallskip

\noindent
to ensure that at least one null is mapped differently in the alternative match. With these modifications,
we can show an analogous result to Theorem~\ref{theo_relres_correct} for the case $\Inter_a=\Inter_b$.

\section{Implementation and Global Optimisations}\label{sec_impl}

We provide a C++ implementation of our algorithms, which also includes
some additional optimisations and methods as described next.
Our prototype is integrated with the free rule engine VLog~\cite{UJK:VLog2016},
so that we can use its facilities for loading rules and checking MFA (see Section~\ref{sec_eval}).
Reasoning algorithms of VLog are not used in our code.

The algorithms of Sections~\ref{sec_relpos} and \ref{sec_relres} 
use optimisations that are \emph{local} to the task of computing
dependencies for a single pair of rules.
The quadratic number of potential rule pairs is often so large, however,
that even the most optimised checks lead to significant overhead.
We therefore build index structures that map predicates $p$ to rules that 
use $p$ in their body or head, respectively.
For each rule $\arule_1$, we then check $\arule_1\relpos\arule_2$ only
for rules $\arule_2$ that mention some predicate from $\rhead(\arule_1)$
in their body, and analogously for $\arule_1\relres\arule_2$.

Specifically for large rule sets, we further observed that many rules
share the exact same structure up to some renaming of predicates and variables.
For every rule pair considered, we therefore create an abstraction that
captures the co-occurrence of predicates but not the concrete predicate names.
This abstraction is used as a key to cache results of prior computations
that can be re-used when encountering rule pairs with the exact same pattern
of predicate names.

Besides these additional optimisations, we also implemented
unoptimised variants of the algorithms of Sections~\ref{sec_relpos} and \ref{sec_relres}
to be used as a base-line in experiments.
Instead of our goal-directed check-and-extend strategy, we simply iterate over all possible
mappings until a dependency is found or the search is completed.

\section{Evaluation}\label{sec_eval}

We have evaluated our implementation regarding (1) efficiency of our optimisations and
(2) utility for solving practical problems.
The latter also led to the first study of so-called \emph{core stratified} real-world rule sets. 
Our evaluation machine is a mid-end server (Debian Linux 9.13; Intel Xeon CPU E5-2637v4@3.50GHz;
384GB RAM DDR4; 960GB SSD), but our implementation is single-threaded and 
did not use more than 2GB of RAM per individual experiments.
\begin{tr}
Our source code, experimental data, instructions for repeating all experiemtns, and our own raw measurements are available on \href{https://github.com/knowsys/2022-ISWC-reliances}{GitHub}.
\end{tr}

\subsubsection*{Experimental Data}
All experiments use the same corpus of rule sets, created from real-world OWL ontologies
of the \emph{Oxford Ontology Repository} (\url{http://www.cs.ox.ac.uk/isg/ontologies/}). %
OWL is based on a fragment of first-order logic that overlaps with existential rules.
OWL axioms that involve datatypes were deleted; any other axiom was syntactically
transformed to obtain a Horn clause that can be
written as a rule. This may fail if axioms use unsupported features,
especially those related to (positive) disjunctions and equality.
We dropped ontologies that could not fully be translated or that required no existential
quantifier in the translation.

Thereby \numOfRuleSets of the overall \numOfOntologies ontologies were converted
to existential rules, corresponding largely to those ontologies in the logic
Horn-$\mathcal{SRI}$ \cite{KRH:HornDLs2012}. The corpus contains
63 small (18--1,000 rules), 90 medium (1,000--10,000 rules), and 48 large (10,000--167,351 rules) sets.
Our translation avoided normalisation and auxiliary predicates, which would
profoundly affect dependencies. This also led to larger rule bodies and heads, both
ranging up to 31 atoms.
\subsubsection*{Optimisation impact}
We compare four software variants to evaluate the utility of our proposed optimisations.
Our baseline \imp[N] is the unoptimised version described in Section~\ref{sec_impl},
while \imp[L] uses the locally optimised algorithms of Sections~\ref{sec_relpos} and \ref{sec_relres}.
Version \imp[G] is obtained from \imp[N] by enabling the global optimisations of Section~\ref{sec_impl},
and \imp[A] combines all optimisations of \imp[L] and \imp[G].
For each of the four cases, we measured the total time of determining all positive reliances and all
restraints for each rule set. A timeout of 60sec was used.
The number of timeouts for each experiment was as follows:

\begin{center}\mbox{}\hfill
\begin{tabular}{l@{~~~~}c@{~~~~}c@{~~~~}c@{~~~~}c}
$\relpos$ & \imp[N] & \imp[L] & \imp[G] &\imp[A] \\
 & 80 & 48 & 24 &3\\
\end{tabular}\hfill
\begin{tabular}{l@{~~~~}c@{~~~~}c@{~~~~}c@{~~~~}c}
$\relres$ & \imp[N] & \imp[L] & \imp[G] &\imp[A] \\
 & 87 & 52 & 35 & 15\\
\end{tabular}\hfill\mbox{}
\end{center}

\begin{table}[tbp]
  \caption{Number of rule sets achieving a given order of magnitude of speed-up for computing $\relpos$ (left) and $\relres$ (right)
    from one variant to another; t.o. gives the number of avoided timeouts}  \label{tab_opt}

  \newcommand{\myspace}{\hspace{2pt}}
  \newcommand{\smallheader}[1]{\ghost{\rotatebox{45}{#1}}\phantom{Xxi}\rule{0pt}{5ex}}
  \newcommand{\smallerheader}[1]{\ghost{\rotatebox{45}{#1}}\phantom{Xx}\rule{0pt}{5ex}}
   \centering
  \begin{tabular}{@{\myspace}l@{\myspace}}
	\rule{0pt}{5ex}\\
    \midrule
    $\imp/\imp[L]$\\
    $\imp[G]/\imp[A]$\\
    $\imp[N]/\imp[G]$\\
    $\imp[L]/\imp[A]$\\
    \bottomrule
  \end{tabular}~
  \begin{tabular}{@{}r@{\myspace\myspace}r@{\myspace}r@{\myspace}r@{\myspace}r@{\myspace\myspace\myspace}r@{\myspace}}
      \smallheader{${=}1$} & \smallheader{${<}10$} & \smallheader{${<}10^2$} & \smallheader{${<}10^3$} & \smallheader{${\geq}10^3$} & \smallerheader{t.o.} \\
    \midrule
    48	& 104 &	14 &  1 &  2 & 32 \\
    103	&  67 &	 9 &  1 &  0 & 21 \\
    24  &   1 & 27 & 33 & 60 & 56 \\
    5	  &  33 & 30 & 41 & 47 & 45 \\
    \bottomrule
  \end{tabular}~
  \begin{tabular}{@{}r@{\myspace\myspace}r@{\myspace}r@{\myspace}r@{\myspace}r@{\myspace\myspace\myspace}r@{\myspace\myspace}}
    \smallheader{${=}1$} & \smallheader{${<}10$} & \smallheader{${<}10^2$} & \smallheader{${<}10^3$} & \smallheader{${\geq}10^3$} & \smallerheader{t.o.} \\
    \midrule
     53	& 92 & 17 &	 2 &	2 &	35 \\
     90 &	81 &	9 &	 1 &	0 &	20 \\
     35 &	11 & 53 &	30 & 20 &	52 \\
     17 &	72 & 48 &	10 & 17 &	37 \\
    \bottomrule
  \end{tabular}
\end{table}

To present the remaining results, we focus on \emph{speed-up}, i.e., the ratio of runtime of a less optimised variant over
runtime of a more optimised one.
Table~\ref{tab_opt} classifies the observed speed-ups in several scenarios by their order of magnitude.
For example, in the left table, the number 14 in line $\imp/\imp[L]$ and column ``${<}10^2$'' means
that for 14 of the 201 rule sets, \imp[L] was between $10$--$10^2$ times faster than \imp[N].
Note that $\imp[G]/\imp[A]$ shows the effect of adding \emph{local} optimisations to \imp[G].
Column ``${=}1$'' shows cases where both variants agree,
and column ``t.o.'' cases where the optimisation avoided a prior timeout (the speed-up cannot be computed
since the timeout does not correspond to a time).

We conclude that both \imp[L] and \imp[G] can lead to significant performance gains across a range of ontologies.
Strong effects are seen against the baseline ($\imp/\imp[L]$ and $\imp/\imp[G]$), but also (to a slightly lesser
extent) against variants with the other optimisations ($\imp[G]/\imp[A]$ and $\imp[L]/\imp[A]$).
Overall, $\relres$ turned out to be slower than $\relpos$, with the global optimisations being less effective.

\begin{figure}[tbp]
  \mbox{}\hfill\includegraphics[width=.92\linewidth]{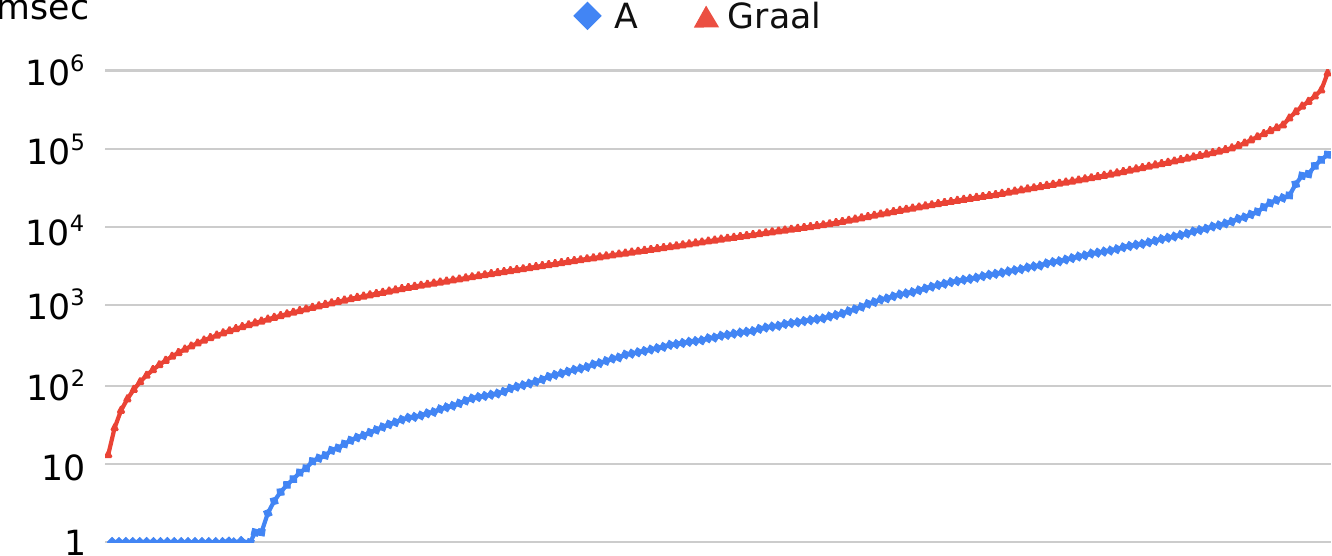}\hfill\mbox{}
  \caption{Positive reliance computation in Graal (top) and our system (bottom)}
  \label{fig:graal}
\end{figure}
\subsubsection*{Acyclic positive reliances} 
For rule sets where the graph of positive reliances is acyclic, query answering is possible with many existing rule engines~\cite{BLMS11:decline}.
To evaluate how our work compares to the state of the art in computing this graph, we
measure the time taken by Graal to find all positive reliances and compare them to our prototype \imp[A] from above.
The results are shown in Figure~\ref{fig:graal}.

Our approach consistently outperformed Graal by about one order of magnitude.
Overall, we can classify 178 ontologies in under 1sec, making this analysis feasible at reasoning time.
The difference in execution time is explained by our optimisations: given two rules $\rho_1$ and $\rho_2$, Graal computes all (exponentially many in the worst case) different ways to unify the $\rhead(\rho_1)$ with $\rbody(\rho_2)$ while our implementation (1) stops when a positive reliance is discovered, (2) discards atom mappings when a negative result is guaranteed, and (3) caches results of previous computations.

Recall that Graal uses a slightly weaker notion of positive reliance (cf.\ Sect.~\ref{sec_deps}),
which leads to more cycles: we find 36 acyclic sets in Graal, but 70 such sets in our system.

\subsubsection*{Faster MFA}
\emph{Model-faithful acyclicity} (MFA) is an advanced analysis of rule sets that can discover
decidability of query answering in many cases, but is \TwoExpTime-complete \cite{CG+13:acyclicity}.
However, instead of performing this costly analysis on the whole rule set, an equivalent result 
can be obtained by analysing each strongly connected components of the $\relpos$-graph individually.
We measure the times for both approaches using the 
MFA implementation of VLog and our optimised variant \imp[A], with a timeout of 30min per rule set.
The two variants are denoted \imp[V] (VLog MFA) and \imp[C] (component-wise MFA).

Using \imp[C], 163 ontologies are classified as MFA, 33 fail MFA, and 5 cases time out.
\imp[V] times out in 10 cases, but agrees on all other outcomes.
\imp[C] is slower in three cases that still run in under 50msec.
The numbers of speed-ups, grouped by order of magnitude, are as follows:

\begin{center}
\newcommand{\smallheader}[1]{#1}%
  \begin{tabular}{l@{~~~~}r@{~~~~}r@{~~~~}r@{~~~~}r@{~~~~}r}
Speed-up & \smallheader{$=1$} & \smallheader{$<10$} & \smallheader{$<10^2$} & \smallheader{$<10^3$} & \smallheader{$\geq 10^3$} \\
\midrule
$\imp[V]/\imp[C]$ & 0 & 85 & 54\phantom{$^3$} & 41\phantom{$^3$} & 11\phantom{$^3$} \\
  \end{tabular}%
\end{center}

We conclude that our optimised reliance computation is a feasible approach for 
speeding up MFA analysis.

\subsubsection*{Core stratification}
We can use our implementation to determine how common this favourable property (cf.\xspace Sect.~\ref{sec_deps}) is among real-world ontologies.
The analysis was feasible for 200 rule sets in our corpus,
yielding 44 core stratified sets with up to 121,712 rules.
One can improve this result by considering \emph{pieces},
minimal subsets of rule heads where each two atoms refer to a common existentially
quantified variable \cite{BLMS11:decline}. Each rule can then equivalently be replaced
by several rules, each combining the original body with one of the pieces of the original head.
Applying this transformation to our rule sets leads to more fine-grained dependencies
that have fewer cycles over $\relres$. With this modification, 75 rule sets are core stratified.

Our implementation fails in one case (ontology ID 00477), containing 167,351 rules like
$A(x)\to\exists v.\, \textsf{located-in}(x,v)\wedge B(v)$, for various $A$ and $B$.
The required $>28\times 10^9$ checks, though mostly cached, take very long.
In spite of many $\relres$-relations, the set is core-stratified as it describes a proper meronomy.

The remaining 125 rule sets are not core stratified.
To validate the outcome, we have analysed these sets manually, and found several
common reasons why ontologies were indeed not core stratified (and therefore correctly classified in our implementation).
The following two examples explain two typical situations.

\begin{example}
In some cases, core stratification fails even though there is a natural rule application order that always leads
to a core. Consider the rules
$\arule_1= a(x) \rightarrow \exists v.\, r(x, v) \wedge b(v)$,
$\arule_2= r(x, y) \rightarrow s(y, x)$, and
$\arule_3= s(x, y) \rightarrow r(y, x)$.
This set is not core stratified since we have $\arule_1 \relpos \arule_3$, $\arule_2 \relpos \arule_3$, $\arule_3 \relpos \arule_2$, and $\arule_3 \relres \arule_1$.
However, prioritising $\arule_2$ and $\arule_3$ over $\arule_1$
(i.e., using a \emph{Datalog-first} strategy~\cite{Carral+19:ChasingSets}) always leads to a core.
Indeed, the positive reliance $\arule_1 \relpos \arule_3$ over-estimates relevant rule applications,
since no new atom produced by $\arule_1$
can (indirectly) lead to an application of $\arule_3$.
\end{example}

\begin{example}\label{ex_noncore_trans}
In other cases, there is indeed no data-independent strategy for rule applications that would always 
lead to a core.
Consider the rules $\rho_1= a(x) \rightarrow \exists v.\, r(x, v) \wedge b(v)$
and $\rho_2= r(x, y) \wedge r(y, z) \rightarrow r(x, z)$.
Both are common in OWL ontologies with existential axioms and transitive roles.
The rule set is not core stratified since $\rho_1\relpos\rho_2$ and $\rho_2\relres\rho_1$.

Consider $\Inter_a = \{a(1), a(2), r(1, 2)\}$.
Applying $\arule_1$ over $\Inter_a$ to all matches 
yields $\Inter_b = \Inter_a \cup \{r(1, n), b(n), r(2, m), b(m)\}$,
which makes $\arule_2$ applicable to obtain $\Inter_c = \Inter_b \cup \{r(1, m)\}$.
Here we have the alternative match $h^A = \{1 \mapsto 1, 2 \mapsto 2, n \mapsto m\}$.

In contrast, applying $\arule_1$ only for the match $\{x\mapsto 2\}$ produces
$\Inter_b' = \Inter_a \cup \{r(2,n),b(n)\}$.
A subsequent application of $\rho_2$ yields $\Inter_c' = \Inter_b' \cup \{r(1,n)\}$, which is a core model.
Indeed, core models could often be achieved in such settings, but require fine-grained, data-dependent
strategies that cannot be found by static analysis (concretely: we could consider $r$ as a pre-order and
apply $\arule_1$ to the $r$-greatest elements first, followed by an exhaustive application of $\arule_2$).
\end{example}

Overall, our manual inspection supported the correctness of our computation and led to interesting
first insights about core stratification in practical cases. Regarding the contribution of this work,
our main conclusion of this evaluation is
that our proposed algorithms are able to solve real-world tasks that require the computation of positive reliances and
restraints over large ontologies.

\section{Conclusions}\label{sec_conc}

We have shown that even the complex forms of dependencies that arise
with existential rules can be implemented efficiently, and that doing so
enables a number of uses of practical and theoretical interest.
In particular, several previously proposed approaches can be made significantly faster or
implemented for the first time at all.
Our methods can be adapted to cover further cases, especially the
\emph{negative reliances}.

Our work opens up a path towards further uses of reliance-based analyses in practice.
Already our experiments on core stratification -- though primarily intended to
evaluate the practical feasibility of our restraint algorithm -- also showed that
(a) core stratification does occur in many non-trivial real-world ontologies, whereas
(b) there are also relevant cases where this criterion fails although a rule-based core computation seems to be within reach.
This could be a starting point for refining this notion.
It is also interesting to ask whether good ontology design should, in principle, lead to
specifications that naturally produce cores, i.e., that robustly avoid redundancies.
A different research path is to ask how knowledge of dependencies
can be used to speed up reasoning.
Indeed, dependencies embody characteristics of
existential rule reasoning that are not found in other rule languages,
and that therefore deserve further attention.

\begin{paper}
\paragraph{Supplemental Material Statement}
We provide full proofs in the techical report published on arXiv~\cite{Ivliev+22:Reliances-TR-2022}.
Our source code, experimental data, instructions for repeating all experiemtns, and our own raw measurements are available on \href{https://github.com/knowsys/2022-ISWC-reliances}{GitHub}.
\end{paper}

\paragraph{Acknowledgments}
This work is partly supported
by Deutsche Forschungsgemeinschaft (DFG, German Research Foundation) in project 389792660 (TRR 248, \href{https://www.perspicuous-computing.science/}{Center for Perspicuous Systems}),
by the Bundesministerium für Bildung und Forschung (BMBF, Federal Ministry of Education and Research) under European ITEA project 01IS21084 (\href{https://www.innosale.eu/}{InnoSale, Innovating Sales and Planning of Complex Industrial Products Exploiting Artificial Intelligence})
and \href{https://www.scads.de}{Center for Scalable Data Analytics and Artificial Intelligence} (ScaDS.AI),
by BMBF and DAAD (German Academic Exchange Service) in project 57616814 (SECAI, School of Embedded and Composite AI),
and by the \href{https://cfaed.tu-dresden.de}{Center for Advancing Electronics Dresden} (cfaed).

\bibliographystyle{splncs04}

\begin{tr}
\appendix
\section{Proof of Theorem~\ref{theo_relpos_correct}}

In the following, 
we may understand a substitution $\sigma$ 
as a homomorphism and therefore
as a match for some rule $\arule$ 
if the restriction of $\sigma$ to the constants and variables occurring in the body of $\arule$ 
is a homomorphism or match in the defined sense.
Furthermore,
for any function $f$ we define $\im(f) = f(\dom(f))$ as the image of $f$.
We make use of function compositions $g\circ f\colon A \to C$ for functions $f\colon A\to B$ and $g\colon B\to C$, defined by $(g\circ f)(x) = g(f(x))$ for all $x\in A$.
Recall that, in contrast, concatenation of substitutions $\eta$ and $\omega$ 
(which are also functions) is denoted by $\eta\omega$ and is defined by $(\eta\omega)(x)=\omega(\eta(x))$.

The next lemma establishes a connection between 
the unifier $\eta\omega$ in Algorithms \ref{alg_check_positive} and \ref{alg_check_restraint}
and any homomorphism $h$ that serves a witness for a reliance 
while also being a unifier for the considered atom mapping.
\begin{lemma}\label{Lemma_Tau}
  Let $m$ be an atom mapping 
  and $\eta$ the most general unifier for $m$
  with $\im(\eta) \cap \im(\omega) = \emptyset$.
  Let $h$ be a homomorphism
  which is also a unifier of $m$.
  Then there exists a function $\tau \colon \Clang \cup \Nlang \to \Clang \cup \Nlang$
  such that $h \subseteq \tau \circ (\eta\omega)$.
\end{lemma}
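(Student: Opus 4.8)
The plan is to unwind the definitions of ``most general unifier'' and ``unifier'' and then track what happens to each term. First I would recall that, since $\eta$ is an mgu for $m$, every unifier $\nu$ of $m$ factors as $\nu = \eta\sigma$ for some substitution $\sigma$. In particular this applies to $h$ itself, viewed as a substitution/unifier of $m$: we obtain a substitution $\sigma$ with $h = \eta\sigma$ on all terms occurring in $\dom(m)$ and its images under $m$. So the task reduces to relating $\eta\sigma$ to $\eta\omega$ — i.e. to producing a $\tau$ with $\eta\sigma \subseteq \tau\circ(\eta\omega)$, at least on the relevant terms, and then checking that ``$\subseteq$'' (as sets of pairs) is exactly the statement we need, where $h$ need not be total but $\tau\circ(\eta\omega)$ can be.

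Next I would construct $\tau$ explicitly. The substitution $\omega$ is injective on $\Vlang_\forall \cup \Vlang_\exists$ by construction (distinct variables go to distinct fresh constants/nulls), and $\im(\eta)$ is disjoint from $\im(\omega)$ by hypothesis; moreover $\omega$ is the identity on $\Clang\cup\Nlang$. The idea is: for a term $t = x\omega$ with $x$ a variable in the range of $\eta$ (so $t\in\im(\omega)$), set $\tau(t) := x\sigma$ — this is well-defined precisely because $\omega$ is injective so $x$ is recovered uniquely from $t$; and for any other term $t\in\Clang\cup\Nlang$ (i.e.\ not of that form), set $\tau(t):=t$. The disjointness $\im(\eta)\cap\im(\omega)=\emptyset$ is what guarantees these two cases don't clash: a term produced as an image of $\eta$ on a variable is never simultaneously one of the fresh $\omega$-constants. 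With this definition, for any variable $v$ in $\dom(m)$, I compute $(\tau\circ(\eta\omega))(v) = \tau((v\eta)\omega)$; writing $v\eta$ as a term built from variables and constants, $\omega$ replaces its variables by the fresh constants/nulls, and $\tau$ then puts back $\sigma$ on exactly those positions, yielding $(v\eta)\sigma = v(\eta\sigma) = v h = h(v)$. Constants are fixed throughout, so this also works there.

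The main obstacle I expect is bookkeeping around \emph{partiality and domains} rather than any deep idea: $h$ is only assumed to be a homomorphism defined on the terms of $\dom(m)$ (and a unifier of $m$), whereas $\tau\circ(\eta\omega)$ is a genuine total-ish substitution on $\Clang\cup\Nlang$; so the statement ``$h\subseteq\tau\circ(\eta\omega)$'' must be read as inclusion of graphs, and I have to be careful that on \emph{every} term where $h$ is defined, the two agree — including terms that $\eta$ leaves untouched (variables not in $\dom(m)$ at all, which $h$ might still map) — and that the factorization $h=\eta\sigma$ really does hold on all of those. I would handle this by first restricting attention to the terms occurring in $\dom(m)\cup\im(m)$, where the mgu property gives the factorization cleanly, and then argue that the chosen $\tau$ is the identity on all remaining constants and nulls so that $h(t)=t=(\tau\circ(\eta\omega))(t)$ there as well, since $\eta$ and $\omega$ both fix such $t$. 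A secondary fussy point is checking that $\tau$ is well-defined as a single function — i.e.\ that no term $t$ falls into both the ``$t=x\omega$'' case for two different $x$, which is exactly injectivity of $\omega$ on variables, and that the ``$t=x\omega$'' case and the ``identity'' case are mutually exclusive, which is exactly $\im(\eta)\cap\im(\omega)=\emptyset$ together with $\omega$ fixing old constants/nulls. Once these domain checks are in place, the verification $h\subseteq\tau\circ(\eta\omega)$ is the short computation above.
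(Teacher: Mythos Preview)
Your approach is correct and essentially the same as the paper's. The only difference is packaging: you invoke the mgu factorisation $h=\eta\sigma$ up front and define $\tau(x\omega):=x\sigma$ (so well-definedness reduces to injectivity of $\omega$), whereas the paper sets $\tau((\eta\omega)(x)):=h(x)$ directly and then uses the mgu property together with the disjointness hypothesis $\im(\eta)\cap\im(\omega)=\emptyset$ inside a short case analysis to show that $(\eta\omega)(x)=(\eta\omega)(y)$ forces $h(x)=h(y)$.
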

\begin{proof}
  We set (a) $\tau((\eta\omega)(x)) = h(x)$
  for every $x \in \dom(\eta)$
  and (b) $\tau(c) = c$ for every $c \in (\Clang \cup \Nlang) \setminus \im(\eta\omega)$.
  Recall that homomorphisms, like $h$, map from sets of atoms to instances (cf. Sect.~\ref{sec_prelims}), meaning that $\im(h)\subseteq \Clang \cup \Nlang$.
  Then, $h \subseteq \tau \circ (\eta\omega)$ by definition of $\tau$.
  
  We now need to argue that 
  $\tau$ is a function.
  $\tau$ is defined on all $x\in\Clang \cup \Nlang$ because of (a) and (b).
  Assume that $(\eta\omega)(x) = (\eta\omega)(y) = t$
  for some $x, y \in \dom(\eta)$.
  If $\eta(x) = t$,
  then $\eta(y) = t$,
  and vice versa,
  since the images of $\eta$ and $\omega$ 
  are disjoint.
  In this case we conclude that
  $h(x) = h(y) = t$ as well
  because $\eta$ is the most general unifier.
  Assume now that $\eta(x) \neq t$ and $\eta(y) \neq t$.
  Since $\omega$ assigns unique constants or nulls 
  to every variable,
  we have that $\eta(x) = \eta(y)$.
  But then $h(x) = h(y)$ again follows from $\eta$
  being the most general unifier.\qed
\end{proof}

In the following lemma, we have two atom sets $\mathcal A$ and $\mathcal B$.
As we assumed in sections \ref{sec_relpos} and \ref{sec_relres}, the variables occurring in $\mathcal A$ or $\mathcal B$ 
divide into variables from sets $\Vlang_\exists$ and $\Vlang_\forall$.
Notice that, 
for a substitution $\theta$, $\theta_\forall(v)=v$ for all $v\in\Vlang_\exists$ and $\theta_\exists(v)=v$ for all $v\in\Vlang_\forall$.
This lemma plays a key role in proving completeness of the 
reliance algorithm.

\begin{lemma}\label{Lemma_Completeness}
  Let $\mathcal{A}$ and $\mathcal{B}$ be two sets of atoms,
  $\mathcal{I}$ an interpretation,
  and $h$ a homomorphism from $\mathcal{A}$ to $\mathcal{I}$.
  Let $\eta$ be a substitution
  such that $h \subseteq \tau \circ (\eta\omega)$
  for some $\tau \colon \Clang \cup \Nlang \to \Clang \cup \Nlang$
  with $\tau(c) = c$ for every $c \in \Clang \cup \Nlang$
  occurring in $\mathcal{A}$ and $\mathcal{B}$.  
  If there is no homomorphism $h^\star$ 
  from $\mathcal{B}$ to $\mathcal{I}$
  that agrees with $h$ on all universal variables,
  then $\mathcal{A}\eta\omega \not\models \exists \vec z .\ \mathcal{B}\eta_\forall \omega_\forall$ 
  where $\vec z$ are all (existential) variables occurring in $\mathcal{B}\eta_\forall\omega_\forall$.
\end{lemma}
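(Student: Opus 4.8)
I would prove the contrapositive: assume that $\mathcal{A}\eta\omega \models \exists \vec{z}.\, \mathcal{B}\eta_\forall\omega_\forall$ and construct a homomorphism $h^\star\colon \mathcal{B}\to\mathcal{I}$ that agrees with $h$ on all universal variables, contradicting the hypothesis. Reading $\models$ in the usual Herbrand-style way for set-of-atoms interpretations, the assumed entailment yields a substitution $\sigma$ with domain $\vec{z}$, mapping the $\vec{z}$ to terms occurring in $\mathcal{A}\eta\omega$, such that $\mathcal{B}\eta_\forall\omega_\forall\sigma \subseteq \mathcal{A}\eta\omega$ (extend $\sigma$ by the identity to a full substitution).

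The first ingredient is the observation that $\tau$, restricted to the terms occurring in $\mathcal{A}\eta\omega$, is itself a homomorphism $\mathcal{A}\eta\omega\to\mathcal{I}$. Indeed, every atom of $\mathcal{A}\eta\omega$ has the form $p(\vec{t})\eta\omega$ with $p(\vec{t})\in\mathcal{A}$, all components of $\vec{t}$ lie in $\dom(h)$ since they occur in $\mathcal{A}$, and using $h\subseteq\tau\circ(\eta\omega)$ we get $\tau\bigl(p(\vec{t})\eta\omega\bigr)=p\bigl((\tau\circ\eta\omega)(\vec{t})\bigr)=p\bigl(h(\vec{t})\bigr)=h\bigl(p(\vec{t})\bigr)\in\mathcal{I}$; moreover $\tau$ fixes every constant and null occurring in $\mathcal{A}$, hence in $\mathcal{A}\eta\omega$, by hypothesis on $\tau$.

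Next I would set $h^\star := \tau\circ(\eta_\forall\omega_\forall\sigma)$ and restrict it to the terms of $\mathcal{B}$. It is a homomorphism $\mathcal{B}\to\mathcal{I}$: for $p(\vec{t})\in\mathcal{B}$ we have $p(\vec{t})\eta_\forall\omega_\forall\sigma\in\mathcal{B}\eta_\forall\omega_\forall\sigma\subseteq\mathcal{A}\eta\omega$, and applying the homomorphism $\tau$ from the previous step sends this atom into $\mathcal{I}$; constants and nulls of $\mathcal{B}$ are mapped to themselves because the substitutions fix them and $\tau$ fixes them by hypothesis. It remains to check agreement with $h$ on universal variables. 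Here I would invoke the property of the unifiers used in Sections~\ref{sec_relpos} and~\ref{sec_relres}, namely that the range of $\eta$ contains no existential variables (existential variables of $\rheadB$ are unified only with constants, nulls, or universal variables, and existentials of $\rheadA$ were already replaced by nulls via $\omega_\exists$). Consequently, for a universal variable $x\in\dom(h)$, we have $x\eta_\forall = x\eta$ by definition of $\eta_\forall$, and since $x\eta\in\Vlang_\forall\cup\Clang\cup\Nlang$ we also get $x\eta_\forall\omega_\forall = x\eta\omega$, which is ground and hence fixed by $\sigma$; therefore $h^\star(x)=\tau(x\eta\omega)=(\tau\circ\eta\omega)(x)=h(x)$. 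This produces the forbidden $h^\star$, so the entailment cannot hold, which is the claim.

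I expect the main obstacle to be exactly this last bookkeeping step: keeping straight the five interacting substitutions $\eta$, $\omega$, $\eta_\forall$, $\omega_\forall$, $\sigma$, and justifying that the ``$\forall$-restricted'' application $\eta_\forall\omega_\forall$ coincides with the full application $\eta\omega$ on universal variables — which rests crucially on the fact that a most general unifier of the atom mappings considered only ever equates variables ``upward'' to universal ones (never to an existential variable). A secondary point worth spelling out is the precise reading of $\models$ for interpretations given as sets of atoms, so that the witnessing substitution $\sigma$ may indeed be assumed to take values among the terms of $\mathcal{A}\eta\omega$; the rest is routine verification of the homomorphism conditions.
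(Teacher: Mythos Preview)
Your proof is correct and follows essentially the same route as the paper's: both argue by contraposition, extract a witnessing substitution for the entailment, post-compose with $\tau$ to land in $\mathcal{I}$, and then verify agreement with $h$ on universal variables via $h\subseteq\tau\circ(\eta\omega)$. Your explicit identification of the side condition that $\eta$ never maps into $\Vlang_\exists$ (so that $\eta_\forall\omega_\forall$ and $\eta\omega$ agree on universal variables) is exactly what the paper uses tacitly when it asserts $h_\forall\subseteq\tau\circ(\eta_\forall\omega_\forall)$; one small imprecision is your claim that $\tau$ is a homomorphism on $\mathcal{A}\eta\omega$ in the strict sense---it need not fix the fresh $\omega$-constants---but you only use the atom-preservation part, so the argument goes through unchanged.
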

\begin{proof}
  Assume for a contradiction that $\mathcal{A}\eta\omega \models \exists\vec z .\ \mathcal{B}\eta_\forall \omega_\forall$.
  Then there is a substitution 
  $\eta'_\exists$
  mapping the existential variables in $\mathcal{B}$ (i.e., $\vec z$)
  such that $\mathcal{B}\eta'_\exists \eta_\forall \omega_\forall \subseteq \mathcal{A}\eta\omega$.
  Note that $h_\forall \subseteq \tau \circ (\eta_\forall\omega_\forall)$.
  We define $h^\star = (\tau \circ \eta'_\exists) h_\forall$.
  Therefore, $h^\star$ agrees with $h$ on all universal variables.
  
  Starting with $\mathcal{B}\eta'_\exists \eta_\forall \omega_\forall \subseteq \mathcal{A}\eta\omega$,
  we can concatenate $\tau$ on both sides to obtain
  $\mathcal{B}(\tau \circ \eta'_\exists)(\tau \circ (\eta_\forall \omega_\forall))
  = \mathcal{B} h^\star
   \subseteq \mathcal{A}(\tau \circ (\eta\omega)) = \mathcal{A}h$
  and hence $\mathcal{B} h^\star \subseteq \mathcal{I}$.
  But this implies that $h^\star$ 
  is a homomorphism from $\mathcal{B}$ to 
  $\mathcal{I}$ that agrees with $h$ on all universal variables.
  The first step requires that $\tau$ does not change any of the constants 
  or nulls occurring in $\mathcal{A}$ and $\mathcal{B}$.\qed
\end{proof}

\thmRelPosCorrect*

\begin{proof}
    We separate the correctness of Algorithm~\ref{alg_check_positive} into soundness and completeness.
    \paragraph{Soundness:}
    The call to \extendpos{$\arule_1,\arule_2,\emptyset$}
    returns \textit{true} iff \checkpos{$\arule_1, \arule_2, m, \eta$}${}=\textit{true}$
    for some atom mapping $m$ and some mgu $\eta$,
    which means that L\ref{alg:check_positive:return_true} is reached.
    We set $\rbody^m_2$, $\rbody^\ell_2$ and $\rbody^r_2$ 
    as in Algorithm~\ref{alg_check_positive}.
    Furthermore, we define $\rhead^m_1 = \im(m)$.

    L\ref{alg:check_positive:Ia}
    constructs the interpretation $\Inter_a = (\rbody_1 \cup \rbody^\ell_2 \cup \rbody^r_2)\eta\omega$.
    From this, 
    we can immediately conclude that $\eta\omega$ 
    is a match for $\rho_1$ over $\mathcal{I}_a$.
    It is unsatisfied,
    because of the check in L\ref{alg:check_positive:modelsPsi1}.
    Therefore $\rho_1$ is applicable with the unsatisfied 
    match for $\eta\omega$ over $\mathcal{I}_a$.
    We define $\mathcal{I}_b$
    as the result of applying this match,
    extending existential variables in $\rho_1$ 
    with their image in $\omega_\exists$ (as constructed in L\ref{alg:check_positive:Ib}).
    Note that  $\Inter_a$ cannot 
    contain any null introduced by the above application 
    because of the checks in L\ref{alg:check_positive:noNulls1},
    L\ref{alg:check_positive:noNulls22_left}, and L\ref{alg:check_positive:noNulls22}.
    From the check in L\ref{alg:check_positive:modelsPsi2},
    we know that $\rho_2$ is applicable over $\Inter_b$.
    Thus, we have $\Inter_a \subseteq \Inter_b$ and a function $\eta\omega$
    satisfying conditions (a) and (b) of Definition~\ref{def_posrel}.
    Condition (c) is satisfied because of the check in L\ref{alg:check_positive:modelsPhi2}.

    \paragraph{Completeness:}
    To prove completeness,
    we assume $\arule_1\relpos\arule_2$.
    Hence, there are interpretations $\mathcal{J}_a \subseteq \mathcal{J}_b$
    and functions $h_1$ and $h_2$
    that satisfy Definition~\ref{def_posrel}.
    We may assume, w.l.o.g., that $h_1'$ and $h_2'$ 
    map every existential variable $v$ in their domain to $\omega_\exists(v)$.
    Since $h_2$ is a match for $\rho_2$ over $\mathcal{J}_b$
    but not over $\mathcal{J}_a$,
    there must be a partition 
    $\rbody_2 = B^m_2 \mathrel{\dot\cup} \bar{B}^m_2$
    and $\rhead_1 = H^m_1 \mathrel{\dot\cup} \bar{H}^m_1$
    such that $h_2(B^m_2) = h_1'(H^m_1)$
    and $h_2(\bar{B}^m_2) \subseteq \mathcal{J}_a$.
    We define $h\colon \Clang \cup \Nlang \cup \Vlang \to \Clang \cup \Nlang \cup \Vlang$ as
    $$h(x) = \left\{ \begin{array}{ccl}
      h_2(x) && \text{if $x$ is a variable in } \arule_2 \\
      h_1'(x) && \text{if $x$ is a variable in } \arule_1 \\
      x && \text{otherwise}. \\
    \end{array}
    \right.$$
    The above function is well-defined because $\rho_1$ and $\rho_2$
    are presumed to not share any variables.
    By definition of $h$
    we have that $B^m_2 h = H^m_1 h$,
    implying that $B^m_2$ and $H^m_1$ are unifiable
    and that there is an atom mapping $m$ with $\dom(m) = B^m_2$
    and $\im(m) = H^m_1$
    such that $h$ is a unifier of $m$.
    Therefore, 
    there is also a most general unifier $\eta$ of $m$.
    Since $\omega$ is assumed to assign terms only to constants 
    and nulls not contained in $\arule_1$ or $\arule_2$,
    we can conclude that $\im(\eta) \cap \im(\omega) = \emptyset$,
    and, by Lemma~\ref{Lemma_Tau},
    that $h \subseteq \tau \circ (\eta\omega)$
    for some $\tau \colon \Clang \cup \Nlang \to \Clang \cup \Nlang$.

    In the following,
    we show that each if-condition in Algorithm~\ref{alg_check_positive}
    when called on $m$ and $\eta$ fails,
    which implies that \textit{true} is returned.
    Note that $\bar{B}^m_2 = \rbody^\ell_2 \cup \rbody^r_2$.

    Any variable assigned to a null by $\eta$
    must also be assigned to the same null in $h$,
    since $\eta$ is more general than $h$.
    But then $\mathcal{J}_a$ would need 
    to contain a null introduced by the application of $\rho_2$.
    This follows because 
    $h(\rbody_1) = h_1(\rbody_1) \subseteq \mathcal{J}_a$ and
    $h(\bar{B}^m_2) = h_2(\bar{B}^m_2) \subseteq \mathcal{J}_a$.

    We handle the remaining checks with Lemma~\ref{Lemma_Completeness}.
    Note that since $\eta$ does not assign 
    any existential variables $\eta = \eta_\forall$.
    For L\ref{alg:check_positive:modelsPsi1},
    we set $\mathcal{A}_1 = \rbody_1 \cup \bar{B}^m_2$,
    $\mathcal{B}_1 = \rhead_1$ and $\mathcal{I}_1 = \mathcal{J}_a$.
    Then we have $\Inter_a = \mathcal{A}_1\eta\omega$
    and by Lemma~\ref{Lemma_Completeness}
    that $\Inter_a \not\models \exists \vec z.\ \rhead_1\eta_\forall\omega_\forall$.
    Hence, the check on L\ref{alg:check_positive:modelsPsi1} fails.
    For L\ref{alg:check_positive:modelsPhi2}
    we set $\mathcal{A}_2 = \mathcal{A}_1$, 
    $\mathcal{B}_2 = \rbody_2$ and $\mathcal{I}_2 = \mathcal{J}_a$.
    Note here that $\rbody_2 \omega\eta \subseteq \mathcal{I}_a$
    is equivalent to stating $\mathcal{I}_a \models \exists z.\, \rbody_2\eta_\forall\omega_\forall$.
    For L\ref{alg:check_positive:modelsPsi2}
    we have $\mathcal{A}_3 = \mathcal{A}_2 \cup \rhead_1\eta\omega$, 
    $\mathcal{B} = \rhead_2$ and $\mathcal{I}_3 = \mathcal{J}_b$.
    
    It remains to be shown that the iteration 
    in function \extendpos{$\arule_1,\arule_2,\emptyset$}
    eventually reaches the postulated mapping $m$
    or terminates with result \textit{true} before.
    Recall that the overall procedure only stops and returns
    \textit{false} if all atoms from $\rbody_2$ 
    have been tried to be the initial mapping.
    Hence, \textit{false} cannot be returned
    before either $m$ is reached (in which case it must return \textit{true}
    as shown above)
    or some non-empty subset $m'$ of $m$ is considered. 

    Because there is a unifier $\eta$ for $m$,
    there is one for every non-empty subset $m'\subseteq m$.
    As the order in which atom mappings are created
    depends on the assumed order of the atoms (in rule bodies and heads),
    we need to show that if any such mapping $m'$ with mgu $\eta'$ is reached,
    it is not rejected by the call of \checkpos{$\arule_1,\arule_2,m',\eta'$}.
    As $\eta'$ is an mgu and $m'\subseteq m$,
    we have $\eta \subseteq \tau' \circ (\eta'\omega)$
    for some $\tau' \colon \Clang \cup \Nlang \to \Clang \cup \Nlang$
    by Lemma~\ref{Lemma_Tau}.
    \begin{description}
    \item[L\ref{alg:check_positive:noNulls1}:] 
        Every variable that is assigned to a null by $\eta'$ must also 
        be assigned to the same null in $\eta$,
        since the latter is more general.
        But this is not possible because this check failed for 
        \checkpos{$\arule_1,\arule_2,m',\eta$}.
    \item[L\ref{alg:check_positive:noNulls22_left}:] 
        Because of the fixed order of atoms, it holds that $\rbodyB^l$ obtained in the iteration with mapping $m'$ is a subset of $\rbodyB^l$ obtained in the iteration with atom mapping $m$.
        Therefore, if $\rbodyB^l\eta'$ for $m'$ contains a null, so does $\rbodyB^l\eta$ for $m$.
        However, 
        we have already proven that the latter is not the case.
    \item[L\ref{alg:check_positive:modelsPsi2}:] 
        Here we may employ Lemma~\ref{Lemma_Completeness} again.
        We write $\Inter_b^m$ and $\Inter_b^{m'}$
        to distinguish the interpretations
        constructed in L\ref{alg:check_positive:Ib}
        of Algorithm~\ref{alg_check_positive}
        when called on $m$ and $m'$ respectively.
        We set $B_2^{m'} = \dom(m')$
        and $\bar{B}_2^{m'} = \rbody_2 \setminus B_2^{m'}$.
        While extending $m'$ to $m$,
        body atoms from $\bar{B}_2^{m'}$ get added to $B^m_2$.
        We define $B^\Delta_2 = \bar{B}_2^{m'} \cap B^m_2$.
        Then, $\bar{B}_2^{m'} = B^m_2 \cup B^\Delta_2$.
        In order to apply Lemma~\ref{Lemma_Completeness},
        we define $\mathcal{A} = \rbody_1 \cup \bar{B}_2^{m'} \cup \rhead_1\omega_\exists$,
        $\mathcal{B} = \rhead_2$ and $\mathcal{I} = \Inter_b^m$.
        From the definition of $\mathcal{A}$
        it is apparent that $\mathcal{A}\eta'\omega = \Inter_b^{m'}$.
        What needs to be shown is that $\eta\omega$
        is a homomorphism from $\mathcal{A}$ 
        to $\mathcal{I} = \Inter_b^m$.
        From the construction of $\Inter_b^m$
        we immediately obtain 
        that $(\rbody_1 \cup \rhead_1\omega_\exists \cup \bar{B}^m_2)\eta\omega \subseteq \Inter_b^m$.
        From $\eta$ being 
        a unifier between $B^m_2$ and $H^m_1$
        we get
        $B^\Delta_2 \eta\omega \subseteq B^m_2 \eta\omega
        \subseteq H^m_1 \omega_\exists \eta\omega \subseteq \Inter_b^m$.
        Therefore, $\mathcal{A} \eta\omega \subseteq \Inter_b^m$.
    \end{description}
    As every one of the above-mentioned checks fails on $m'$ and $\eta'$, 
    the algorithm either returns true and the computation finishes 
    or it goes on by extending $m'$ towards $m$ and ultimately accepting it.
    Hence, the algorithm is complete.
\end{proof}

\section{Proof of Theorem~\ref{theo_relres_correct}}

Beyond the proof of Theorem~\ref{theo_relres_correct}, we provide additional details on the case where a rule restrains itself, as outlined in the paper.

\thmRelResCorrect*
\begin{proof}
  Similar to the proof of Theorem~\ref{theo_relpos_correct}, 
  we separate our arguments into soundness and completeness.
  \paragraph{Soundness:} 
  The call to
  \extendres{$\arule_1,\arule_2,\emptyset$}${}$
  returns \textit{true} iff \checkres{$\arule_1,\arule_2,m,\eta$}${}=\textit{true}$ 
  for some atom mapping $m$ and mgu $\eta$, 
  meaning L\ref{alg:check_restraint:returnTrue} is reached in that call.
  We set $\rhead^m_2$, $\rhead^\ell_2$ and $\rhead^r_2$
  as in Algorithm~\ref{alg_check_restraint}
  when called on $m$ and $\eta$.
  In addition, we set $\rhead^m_1 = \im(m)$.

  L\ref{alg:check_restraint:assignIaTilde}
  constructs an interpretation $\tilde{\Inter}_a$
  from $\rbodyB\eta\omega$.
  This makes $\eta\omega$ a match for $\arule_2$ over $\tilde{\Inter}_a$
  that is unsatisfied due to the check in L\ref{alg:check_restraint:modelsPsi2}.
    L\ref{alg:check_restraint:assignIbTilde}
    builds the interpretation $\tilde{\mathcal{I}}_b = \mathcal{I}_a \cup (\rbodyA \cup \rhead^\ell_2\cup \rhead^r_2)\eta\omega$.
    By construction,
    $\eta\omega$ is a match for $\arule_1$ over that interpretation.
    It is also unsatisfied, 
    which results from the check in L\ref{alg:check_restraint:models}.
    We define the interpretation
    $\Inter_b$ as the result of applying 
    $\arule_1$ with the match $\eta\omega$,
    extending existential variables with their image under $\omega$.
    Note that $\tilde{\Inter}_a$ does not 
    contain nulls introduced by applying $\arule_1$ or $\arule_2$
    because of the checks 
    in L\ref{alg:check_restraint:noUniversalNull}
    and the fact that $\eta$ may not map anything to existential 
    variables (and hence no body variable can be mapped to a null by $\omega$).
    Similarly,
    $\tilde{\Inter}_b$ does not contain nulls 
    introduced from the application of $\arule_1$,
    which is implied by the checks 
    in L\ref{alg:check_restraint:noUniversalNull},
    L\ref{alg:check_restraint:noNullsPsi22}
    and L\ref{alg:check_restraint:noNullsPsi22_left}.
    In summary, we obtain interpretations $\Inter_a \subseteq \Inter_b$,
    such that $\omega_\exists\eta_\forall\omega_\forall$
    satisfies conditions (a) and (b) of Definition~\ref{def_restrained}.

    The alternative match is given by  
    $\eta^A\colon \rheadB\eta_\forall\omega \to \Inter_b$ 
    with $\eta^A(t) = t\eta\omega$.
    It is clear that 
    $\eta^A(t) = t$ for all terms in $\rbodyB\eta\omega$.
    The check in L\ref{alg:check_restraint:containsExists}
    ensures that $\eta^A$ maps at least one null 
    to some new term that is not present in $\rheadB\eta_\forall\omega$.
    By construction, 
    $(\rhead^\ell_2 \cup \rhead^r_2) \eta\omega$ is contained in $\Inter_b$.
    Furthermore,
    we have that 
    $(\rhead^m_2)\eta = (\rhead^m_1)\omega_\exists\eta$
    and therefore that 
    $(\rhead^m_2)\eta\omega = (\rhead^m_1)\omega_\exists\eta\omega_\forall \subseteq \Inter_b$.
    Thus we have $\rhead_2\eta\omega \subseteq \Inter_b$,
    which implies that $\eta^A$ is a homomorphism
    from $\rhead_2$ to $\Inter_b$.
    Note that $\eta^A$ is not an alternative match 
    over $\tilde{\Inter_b}$ because of the check in L\ref{alg:check_restraint:given}.

    \paragraph{Completeness:}
    To prove completeness,
    we assume that $\arule_1\relres\arule_2$,
    and hence that there are interpretations $\mathcal{J}_a \subset \mathcal{J}_b$
    and the functions $h_1$, $h_2$ and $h^A$
    satisfying the conditions of Definition \ref{def_restrained}.
    We may assume w.l.o.g. 
    that $h_1'$ and $h_2'$ 
    map every existential variable $v$
    in their domain to $\omega_\exists(v)$.
    Since $h^A$ 
    is an alternative match for $h'_2$ and $\arule_2$
    on $\mathcal{J}_b$
    but is not for $h_2$ and $\arule_2$ on 
    $\tilde{\mathcal{J}}_b = \mathcal{J}_b \setminus h_1'(\rhead_1)$,
    there must be 
    a partition $\rhead_2 = H^m_2 \mathrel{\dot\cup} \bar{H}^m_2$
    and a partition $\rhead_1 = H^m_1 \mathrel{\dot\cup} \bar{H}^m_1$
    such that 
    $h^A(h_2'(H^m_2)) = h_1'(H^m_1)$
    and $h^A(h_2'(\bar{H}^m_2)) \subseteq \tilde{\mathcal{J}}_b$.
    We define a substitution $h\colon \Clang \cup \Nlang \cup \Vlang \to \Clang \cup \Nlang \cup \Vlang$ as
    $$h(x) = \left\{ \begin{array}{ccl}
        h^A(h_2'(x)) && \text{if $x$ is a variable in } \arule_2 \\
        h_1'(x) && \text{if $x$ is a variable in } \arule_1 \\
        x && \text{otherwise}. \\
    \end{array}
    \right.$$
    The above function is well-defined because
    because $\arule_1$ and $\arule_2$ do not share any variables.
    By definition of $h$ we have
    $H^m_2 h = H^m_1 h$, implying
    that $H^m_2$ and $H^m_1$ are unifiable 
    and that there is an atom mapping $m$
    with $\dom(m) = H^m_2$ and $\im(m) = H^m_1$
    such that $h$ is a unifier of $m$.
    Hence we also obtain a most general unifier $\eta$
    of $m$.
    Since $\omega$ is assumed to assign terms only to
    constants and nulls not contained in $\arule_1$ or $\arule_2$,
    we can conclude that $\im(\eta) \cap \im(\omega) = \emptyset$
    and by Lemma~\ref{Lemma_Tau}
    that $h \subseteq \tau \circ (\eta\omega)$
    for some $\tau \colon \Clang \cup \Nlang \to \Clang \cup \Nlang$.

    In the following,
    we argue why each if-condition 
    in Algorithm \ref{alg_check_restraint} 
    when called on $m$ and $\eta$ fails.
    Every variable 
    assigned to a null by $\eta$
    must also be assigned to the same null 
    in $h$ since $\eta$
    is the most general unifier.
    But then either $\mathcal{J}_a$
    or $\tilde{\mathcal{J}}_b$
    would contain a null introduced by the application 
    of $\arule_1$.
    This follows from the fact that 
    $h$ is a homomorphism from $\rbody_2$ to $\mathcal{J}_a$ 
    and a homomorphism from $\rbody_1$ to $\tilde{\mathcal{J}}_b$.
    By this reasoning,
    the if-conditions on lines L\ref{alg:check_restraint:noUniversalNull},
    L\ref{alg:check_restraint:noNullsPsi22_left} and
    L\ref{alg:check_restraint:noNullsPsi22}
    all fail.

    To show that the check in L\ref{alg:check_restraint:containsExists} fails, 
    assume that $H^m_2$ does not contain any existential variables.
    Then $h^A(h_2'(H^m_2)) = h_2'(H^m_2) \subseteq \mathcal{J}_a \subset \tilde{\mathcal{J}}_b$
    as $h_2'(\rhead_2) \supseteq h_2'(H^m_2)$ results 
    from applying $\arule_2$ with match $h_2$.
    We further have $h^A(h_2'(\bar{H}^m_2)) \subseteq \tilde{\mathcal{J}}_b$.
    Overall this implies $h^A(h_2'(\rhead_2) \subseteq \tilde{\mathcal{J}}_b$,
    which contradicts condition (d) of Definition \ref{def_restrained}.

    We continue with the checks in L\ref{alg:check_restraint:modelsPsi2}
    and L\ref{alg:check_restraint:models}.
    We use Lemma \ref{Lemma_Completeness} in both cases 
    to show that if either one of the checks passes,
    then $h_1$ or $h_2$ would have been satisfied.
    For L\ref{alg:check_restraint:modelsPsi2}, 
    we define $\tilde{\mathcal{J}}_a = \mathcal{J}_a \setminus h_2'(\rhead_2)$.
    We have that $h$ 
    is an homomorphism from $\mathcal{A}_1 = \rbody_2$ to $\mathcal{I}_1 = \tilde{\mathcal{J}}_a$,
    since $h_2$ is a match for $\arule_2$.
    Also there is no extension of $h_2$ and therefore of $h$
    to a homomorphism from $\mathcal{B}_2 = \rhead_2$ to $\tilde{\mathcal{J}}_a$.
    Furthermore, we have $\tilde{\Inter}_a = \rbody_2 \eta\omega$.
    Therefore, we can use Lemma \ref{Lemma_Completeness}
    to show that $\tilde{\Inter}_a \not\models \exists \vec z.\ \rhead_2 \eta_\forall \omega_\forall$.
    A similar idea can be used 
    for the check in L\ref{alg:check_restraint:models}.
    This time, we set 
    $\mathcal{A}_2 = (\rbody_2 \cup \rbody_1 \cup \bar{H}^m_2 \cup \rhead_2 \omega_\exists)$,
    $\mathcal{B}_2 = \rhead_1$
    and $\mathcal{I}_2 = \tilde{\mathcal{J}}_b$.
    We have that $h(\rbody_2) = h_2(\rbody_2) \subseteq \tilde{\mathcal{J}}_a$
    because $h_2$ is a match for $\arule_2$;
    $h(\rbody_1) = h_1(\rbody_1) \subseteq \tilde{\mathcal{J}}_b$
    because $h_1$ is a match for $\arule_1$;
    $h(\bar{H}^m_2) = h^A(h_2'(\bar{H}^m_2)) \subseteq \tilde{\mathcal{J}}_b$
    by the initial assumption;
    and finally
    $h(\rhead_2 \omega_\exists) = h_1'(\rhead_2) \subseteq \tilde{\mathcal{J}}_b$
    because the result of applying $\arule_2$ is contained in $\tilde{\mathcal{J}}_b$.
    It is now easy to see
    that $\mathcal{A}_2 \eta\omega = \tilde{\Inter}_b$.

    For the check in line L\ref{alg:check_restraint:given}
    observe that $\rhead_2\eta\omega \subseteq \mathcal{A}_2\eta\omega$.
    It follows from that $\rhead_2(\tau \circ (\eta\omega)) = \rhead_2 h = h^A(h_2(\rhead_2)) 
    \subseteq \mathcal{A}_2(\tau \circ (\eta\omega)) \subseteq \tilde{\mathcal{J}}_b$.
    But this would contradict condition (d) of Definition \ref{def_restrained}.

    As in Theorem~\ref{theo_relres_correct}, it remains to be shown that the iteration in function 
    \extendres{$\arule_1,\arule_2,\emptyset$} 
    reaches the postulated mapping $m$
    or returns \textit{true} earlier.
    Let $m'$ be any atom mapping that can be extended to $m$.
    Because $\eta$ is a unifier for $m$,
    $\eta$ is also a unifier for $m'$.
    This implies that there is a most general unifier $\eta'$ for $m'$
    and by Lemma \ref{Lemma_Tau}.
    Therefore $\eta \subseteq \tau' \circ (\eta'\omega)$
    for some $\tau' \colon \Clang \cup \Nlang \to \Clang \cup \Nlang$.
    We now need to argue that Algorithm~\ref{alg_check_restraint} 
    does not return \textit{false} on $m'$.
    \begin{description}
        \item[L\ref{alg:check_restraint:noUniversalNull}:] 
            Any universal variable assigned to a null by $\eta'$ 
            must also be assigned to the same null by $\eta$,
            since $\eta'$ is more general.
            Because we already know that this check fails for $m$,
            we can conclude that it fails for $m'$ as well.
        \item[L\ref{alg:check_restraint:noNullsPsi22_left}:]
            From the way an atom mapping is extended
            by the modification of Algorithm~\ref{alg_extend_positive},
            we know that if an atom is contained in $\rhead^\ell_2$
            for $m'$, then it is also contained in $\rhead^\ell_2$
            for $m$. Hence, this if-check would also have to fail for $m$,
            which we already ruled out.
        \item[L\ref{alg:check_restraint:modelsPsi2}:]
            We set $\mathcal{A} = \rbody_2$,
            $\mathcal{B} = \rhead_2$ and $\mathcal{I} = \tilde{\Inter}_a$.
            Then, $\eta$ is a homomorphism from $\mathcal{A}$
            to $\mathcal{I}$
            that cannot be extended to a homomorphism from $\mathcal{B}$ to $\mathcal{I}$.
            From Lemma~\ref{Lemma_Completeness}
            we immediately obtain 
            $\rbody_2 \eta'\omega \not\models \exists \vec z.\ \rhead_2 \eta'_\forall\omega_\forall$.
            Therefore this check fails.\qed
    \end{description}

\end{proof}

\SetKwFunction{checkresself}{check$^\square_{\textsf{self}}$}
\SetKwFunction{extendresself}{extend$^\square_{\textsf{self}}$}
\begin{algorithm}[t]\caption{\FuncSty{check$^\square_{\textsf{self}}$}}\label{alg_check_self}

\KwIn{rules $\arule:\rbody\to\exists\vec{v}.\rhead$, atom mapping $m$ with mgu $\eta$}
\KwOut{\textit{true} if a self-restraint is found for $m$}

$\rhead^m \leftarrow \dom(m)$\;
$\rhead^\ell \leftarrow \ghost{$\{\rhead[j]\in(\rhead{\setminus}\,\rhead^m)\mid j\,{<}\,\maxidx{m}\}$}$\;
$\rhead^r \leftarrow \ghost{$\{\rhead[j]\in(\rhead{\setminus}\,\rhead^m)\mid j\,{>}\,\maxidx{m}\}$}$\;

\lIf{$\rhead\eta_\exists = \rhead\omega_\exists$}{\label{alg:check_self:reducing}%
    \KwRet{false}
}

\lIf{$x\eta\in\Nlang$ for some $x\in\Vlang_\forall$}{\label{alg:check_self:noUniversalNull}%
    \KwRet{false}
}

\If{$z\eta\in\Nlang$ for some $z\in\Vlang_\exists$ in $\rhead^\ell$}{\label{alg:check_self:noExistentialNull_left}%
    \KwRet{false}\;
}

\If{$z\eta\in\Nlang$ for some $z\in\Vlang_\exists$ in $\rhead^r$}{\label{alg:check_self:noExistentialNull_right}%
    \KwRet{\extendresself{$\arule$, $m$}}\;
}

$\tilde{\mathcal{I}} \leftarrow (\rbody \cup \rhead^\ell \cup \rhead^r)\eta\omega$\;\label{alg:check_self:ITilde}

\lIf{$\tilde{\mathcal{I}} \models \exists\vec{v}.\, \rhead\eta_\forall\omega_\forall$}{\label{alg:check_self:models}%
    \KwRet{\extendresself{$\arule$, $m$}}
}

\KwRet{true}\;
\end{algorithm}
Algorithm~\ref{alg_check_self} specifies the central function that we use for checking the special case
where a rule restrains itself through a single rule application (rather than two distinct applications
as considered before).
\extendresself{$\arule$,$m$} works the 
same way as the regular \extendres{$\arule_1$, $\arule_2$, $m$} function.
However, 
since we are dealing with only a single rule application 
now, no renaming of variables is required.
However, head atoms in the domain of the atom mapping may still contain existential variables,
whereas those in its range have such variables replaced by nulls.
The essential correctness result for the self-restraining case is as follows:

\begin{theorem}\label{theo_relres_self_correct}
Given a rule $\arule$,
\extendresself{$\arule$,$\emptyset$}${}=\textit{true}$ iff $\arule\relres\arule$
holds according to Definition~\ref{def_restrained} for some $\Inter_a=\Inter_b$. 
\end{theorem}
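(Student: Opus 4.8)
The statement is the single-rule-application counterpart of Theorem~\ref{theo_relres_correct}, and the plan is to adapt that proof — again splitting into soundness and completeness and reusing Lemmas~\ref{Lemma_Tau} and~\ref{Lemma_Completeness} — to three simplifications of the setting. First, one interpretation $\Inter$ now plays the role of both $\Inter_a$ and $\Inter_b$ of Definition~\ref{def_restrained}, so a single application of $\arule$ serves conditions~\eqref{it_rest_bmatch} and~\eqref{it_rest_amatch} and one may take $h_1 = h_2 = h$, extended to $h'$ with existentials sent to their $\omega_\exists$-images (w.l.o.g.). Second, no variable renaming is required, and the atom mapping $m$ unifies atoms of $\rhead$ with atoms of $\rhead\,\omega_\exists$: atoms in $\dom(m)$ may still carry existential variables whereas those in $\im(m)$ have them replaced by nulls, so the mgu $\eta$ computed by \checkresself again encodes the match $h$ and the alternative match $h^A$ simultaneously, just as in Section~\ref{sec_relres}. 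Third, and crucially, once $\Inter_a = \Inter_b$ condition~\eqref{it_rest_noalt} of Definition~\ref{def_restrained} is \emph{automatic}: were $h^A$ an alternative match of $h$ and $\arule$ on $\Inter \setminus h'(\rhead)$, then composing $h^A$ with $h'$ would give a homomorphism $\rhead \to \Inter \setminus h'(\rhead)$ that agrees with $h$ on the body variables, i.e.\ $h$ would be satisfied over the interpretation to which $\arule$ was applied, contradicting applicability. Consequently the only obligation on the alternative match beyond being a body-fixing homomorphism is condition~\eqref{it_altmatch_newhead} of Definition~\ref{def_altmatch} — a genuinely moved null — which is exactly what the check L\ref{alg:check_self:reducing} ($\rhead\,\eta_\exists = \rhead\,\omega_\exists$) enforces in place of L\ref{alg:check_restraint:given}.

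For soundness, suppose \extendresself{$\arule$,$\emptyset$}${} = \textit{true}$; then \checkresself{$\arule, m, \eta$}${} = \textit{true}$ for some atom mapping $m$ and mgu $\eta$, i.e.\ the last line of Algorithm~\ref{alg_check_self} is reached. As in the soundness part of Theorem~\ref{theo_relres_correct}, read the pre-interpretation off as the set $\tilde\Inter = (\rbody \cup \rhead^\ell \cup \rhead^r)\eta\omega$ of L\ref{alg:check_self:ITilde}, so that $\eta_\forall\omega_\forall$ restricted to the body is a match of $\arule$ over $\tilde\Inter$; by the failed check L\ref{alg:check_self:models} and Lemma~\ref{Lemma_Completeness} (applied with $\mathcal A = \rbody \cup \rhead^\ell \cup \rhead^r$, $\mathcal B = \rhead$) this match is unsatisfied, so $\arule$ is applicable and yields the interpretation $\Inter = \tilde\Inter \cup \rhead\,\eta_\forall\omega$, which we take to be both $\Inter_a$ and $\Inter_b$; the failed checks L\ref{alg:check_self:noUniversalNull}--L\ref{alg:check_self:noExistentialNull_right} ensure $\tilde\Inter$ contains no null freshly introduced by this application. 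Let $h^A$ be the candidate alternative match induced by $\eta$ exactly as in Section~\ref{sec_relres}. It is a homomorphism from the instantiated head $h'(\rhead) = \rhead\,\eta_\forall\omega$ into $\Inter$, because its image splits into $\im(m)\,\eta\omega \subseteq \rhead\,\eta_\forall\omega \subseteq \Inter$ on the mapped atoms (using the unification $\alpha\eta = m(\alpha)\eta$) and into images of the remaining head atoms, which lie in $\tilde\Inter$ by construction. It fixes every term of $h(\rbody)$, so condition~\eqref{it_altmatch_samebody} holds; and, since L\ref{alg:check_self:reducing} failed, some null of $h'(\rhead)$ does not occur in $h^A(h'(\rhead))$, so condition~\eqref{it_altmatch_newhead} holds as well. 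Together with the automatic condition~\eqref{it_rest_noalt}, this gives $\arule\relres\arule$ with $\Inter_a = \Inter_b$.

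For completeness, assume $\arule\relres\arule$ holds for some $\Inter_a = \Inter_b$, and write $\Inter$ for this common interpretation, witnessed by a match $h$ (with extension $h'$) and an alternative match $h^A$. Since $h^A$ is an alternative match of $h$ and $\arule$ on $\Inter$ but — by the argument above — cannot be one on $\Inter \setminus h'(\rhead)$, there is a partition $\rhead = H^m \mathrel{\dot\cup} \bar{H}^m$ with $h^A(h'(H^m)) \subseteq h'(\rhead)$ and $h^A(h'(\bar{H}^m)) \subseteq \Inter \setminus h'(\rhead)$; the first inclusion makes $H^m$ unifiable with the matching atoms of $\rhead\,\omega_\exists$, yielding an atom mapping $m$ with $\dom(m) = H^m$ for which the substitution $h$ (read off from $h'$ on the body and $h^A$ on the instantiated head, as in the proof of Theorem~\ref{theo_relres_correct}) is a unifier, hence an mgu $\eta$ with $\im(\eta) \cap \im(\omega) = \emptyset$ and, by Lemma~\ref{Lemma_Tau}, $h \subseteq \tau \circ (\eta\omega)$. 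One then verifies, exactly as in Theorem~\ref{theo_relres_correct}, that \checkresself on $m$ and $\eta$ does not return \textit{false}: the checks L\ref{alg:check_self:noUniversalNull}--L\ref{alg:check_self:noExistentialNull_right} fail because $\eta$ is more general than $h$, whose image lies in $\Inter$, so no variable $\eta$ sends to a null can be one that $\arule$ introduces freshly; L\ref{alg:check_self:reducing} fails because $h^A$ moves a null by condition~\eqref{it_altmatch_newhead}; and L\ref{alg:check_self:models} fails by Lemma~\ref{Lemma_Completeness}, since $h$ does not extend to a homomorphism $\rhead \to \Inter \setminus h'(\rhead)$. Finally, an argument of the same shape as the last step of the proof of Theorem~\ref{theo_relpos_correct} shows that \extendresself reaches $m$ or returns \textit{true} beforehand: for every $m' \subseteq m$ with mgu $\eta'$, Lemma~\ref{Lemma_Tau} gives $\eta \subseteq \tau' \circ (\eta'\omega)$, and the same reasoning rules out a \textit{false} answer on $m'$. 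Hence \extendresself{$\arule$,$\emptyset$}${} = \textit{true}$.

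The main obstacle I expect lies in the bookkeeping around condition~\eqref{it_altmatch_newhead} of Definition~\ref{def_altmatch} together with check L\ref{alg:check_self:reducing}, and in checking that, because $\arule$ restrains itself through a single application, the $h^A$ \emph{read off} from $\eta$ — rather than one that is given — is still a legitimate homomorphism that genuinely moves a null. This amounts to tracking precisely which nulls of $h'(\rhead)$ may reappear in $h^A(h'(\rhead))$, using that $\omega$ introduces fresh nulls disjoint from $\im(\eta)$ so that $h^A$ can only identify nulls with other terms, never produce a spurious coincidence that would falsely satisfy, or falsely violate, condition~\eqref{it_altmatch_newhead}. Everything else is a faithful specialisation of the proof of Theorem~\ref{theo_relres_correct}, with the checks of Algorithm~\ref{alg_check_self} matching that proof's case analysis one for one.
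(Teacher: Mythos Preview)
Your overall strategy is right and mirrors the paper's, but there is a genuine gap in the soundness direction. You set $\tilde\Inter = (\rbody \cup \rhead^\ell \cup \rhead^r)\eta\omega$ and claim that the failed checks L\ref{alg:check_self:noUniversalNull}--L\ref{alg:check_self:noExistentialNull_right} ensure $\tilde\Inter$ contains no null freshly introduced by the rule application. That claim is false. Consider an existential variable $z$ that occurs only in atoms of $\rhead^\ell\cup\rhead^r$ and in no atom of $\dom(m)$. Then $\eta$ does not constrain $z$, so $z\eta = z$ is still a variable, not a null, and L\ref{alg:check_self:noExistentialNull_left}/L\ref{alg:check_self:noExistentialNull_right} do not fire. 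Yet $z\eta\omega = z\omega$ is exactly the null the application would introduce for $z$, so it already sits in $\tilde\Inter$ and the application, as you describe it, is not a valid chase step. Concretely, take $\arule = a(x)\to\exists v,w.\, r(x,v)\wedge s(x,w)$ with $m=\{r(x,v)\mapsto r(x,v\omega_\exists)\}$: the algorithm returns \textit{true}, but your $\tilde\Inter=\{a(c),s(c,w\omega)\}$ already contains the null $w\omega$.

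The paper handles precisely this point by introducing a modified substitution $\omega'$ that agrees with $\omega$ except on existential variables whose $\omega$-null does not occur in $\im(\eta)$; for those it assigns fresh constants instead. Building $\tilde\Inter$ with $\omega'$ makes the nulls of the subsequent application genuinely fresh, and the new constants then serve as targets for the alternative match (so condition~\eqref{it_altmatch_newhead} is still witnessed). This device is specific to the self-restraint case: in Theorem~\ref{theo_relres_correct} the analogous problem cannot arise because $\arule_1$ and $\arule_2$ have disjoint existential variables. Your completeness argument and the treatment of L\ref{alg:check_self:reducing} are fine; the fix is only in how you construct the witnessing interpretation for soundness. (A minor aside: invoking Lemma~\ref{Lemma_Completeness} in the soundness direction is unnecessary, since the failed check L\ref{alg:check_self:models} directly states that the match is unsatisfied.)
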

\begin{proof}
  As before, 
  we divide our argument for soundness and completeness.
  \paragraph{Soundness:}
  Assume that \checkresself{$\arule, m, \eta$}{}$=\textit{true}$ 
  for some atom mapping $m$ and mgu $\eta$.
  We define $\omega'$ to be a substitution
  mapping variables to the same terms as $\omega$
  except for nulls that do not appear in $\im(\eta)$,
  which are assigned to unique constants instead.
  We set $\tilde{\Inter} = (\rbody \cup \rhead^\ell \cup \rhead^r)\eta\omega'$
  similarly as in L\ref{alg:check_self:ITilde}.
  Furthermore,
  let $\Inter = \tilde{\Inter} \cup \rhead \omega_\exists \eta_\forall\omega_\forall$
  be an interpretation.
  By construction of $\tilde{\Inter}$
  we have that $\eta_\forall\omega_\forall$
  is a match for $\tilde{\Inter}$. 
  The check in L\ref{alg:check_self:models}
  ensures that it is unsatisfied.
  Note that $\tilde{\Inter}$ does not contain any nulls 
  introduced by applying $\arule$
  because of the checks in L\ref{alg:check_self:noUniversalNull},
  L\ref{alg:check_self:noExistentialNull_left}, L\ref{alg:check_self:noExistentialNull_right}
  and our definition of $\omega'$.
  The alternative match is given by  
  $\eta^A\colon \rhead\eta_\forall\omega' \to \Inter_b$ 
  with $\eta^A(t) = t\eta\omega'$.
  From the check in L\ref{alg:check_self:reducing}
  it follows $\eta^A$ maps a null not present in $\eta_\forall\omega$.
  Therefore we have $\mathcal{I}_a = \mathcal{I}_b = \mathcal{I}$
  and the functions $\eta_\forall\omega_\forall$ and 
  the alternative match $\eta^A$
  satisfying conditions (a), (b) and (c) of Definition~\ref{def_restrained}.
  Condition (d) does not need to be verified
  since there cannot be an alternative match for $\arule$
  before its application.

  \paragraph{Completeness:}
  Completeness can be handled with similar arguments
  as in Theorem~\ref{theo_relres_correct}.
  Here, we briefly describe how to apply Lemma~\ref{Lemma_Completeness}
  for L\ref{alg:check_self:models}.
  We set $\mathcal{A} = \rbody \cup \rhead^\ell \cup \rhead^r$,
  $\mathcal{B} = \rhead$ and $\mathcal{I} = \tilde{\Inter}$.\qed
\end{proof}

\end{tr}

\end{document}